\documentclass[sigconf]{acmart}

\AtBeginDocument{%
  }

\setcopyright{acmlicensed}
\copyrightyear{2026}
\acmYear{2026}
\acmDOI{}
\acmConference[Conference'26]{Conference}{June 03--05, 2018}{Woodstock, NY}
\acmISBN{978-1-4503-XXXX-X/2018/06}

\usepackage{amsmath}
\usepackage{graphicx}
\usepackage{hyperref}
\usepackage{booktabs, tablefootnote}
\usepackage{multirow}
\usepackage{subcaption}
\usepackage{lipsum} 
\usepackage{bbm}
\usepackage{enumitem,multicol,float,adjustbox}
\usepackage{threeparttable}
\usepackage{mathtools}
\usepackage{xcolor, colortbl}
\usepackage{makecell}
\usepackage{dblfloatfix}
\usepackage{caption}
\usepackage{listings}
\usepackage{array}
\usepackage{enumitem}
\usepackage{xspace}
\usepackage[ruled,vlined,linesnumbered]{algorithm2e}

\renewcommand{\arraystretch}{1.1}

\begin{document}
\copyrightyear{2026}
\acmYear{2026}
\setcopyright{cc}
\setcctype{by}
\acmConference[CIKM '26]{Proceedings of the 35th ACM International Conference on Information and Knowledge Management}{November 07--11, 2026}{Rome, Italy}
\acmBooktitle{Proceedings of the 35th ACM International Conference on Information and Knowledge Management (CIKM '26), November 07--11, 2026, Rome, Italy}
\acmDOI{10.1145/3799682.3841033}
\acmISBN{979-8-4007-2539-5/2026/11}
\title{Beyond Observed Auxiliary Relations: Environment-Conditioned Modeling for Multi-Behavior Recommendation}

\author{Seunghan Lee}
\affiliation{%
  \institution{Korea University}
  \city{Seoul}
  \country{Republic of Korea}}
\email{seunghanlee@korea.ac.kr}

\author{Hyunsik Yoo}
\affiliation{%
  \institution{University of Illinois Urbana-Champaign}
  \city{Champaign}
  \state{IL}
  \country{USA}}
\email{hy40@illinois.edu}

\author{Jian Kang}
\affiliation{%
  \institution{MBZUAI}
  \city{Abu Dhabi}
  \country{United Arab Emirates}}
\email{jian.kang@mbzuai.ac.ae}

\author{Susik Yoon}
\affiliation{%
  \institution{Korea University}
  \city{Seoul}
  \country{Republic of Korea}}
\email{susik@korea.ac.kr}

\author{SeongKu Kang}
\authornote{Corresponding author}
\affiliation{%
  \institution{Korea University}
  \city{Seoul}
  \country{Republic of Korea}}
\email{seongkukang@korea.ac.kr}

\renewcommand{\shortauthors}{Seunghan Lee, Hyunsik Yoo, Jian Kang, Susik Yoon, and SeongKu Kang}

\begin{abstract}
Multi-behavior recommendation (MBR) leverages auxiliary behavioral signals, such as clicks and add-to-cart, to enhance target behavior prediction like purchases.
While recent graph neural network–based approaches have achieved strong performance by systematically propagating auxiliary behavior signals, they still suffer from two fundamental challenges inherent to auxiliary behaviors:
(1) missing auxiliary signals, which hinder generalization to items without auxiliary observations, and
(2) unreliable auxiliary signals, which amplify noise misaligned with the target behavior.
To address these challenges in a unified manner, we propose \proposed, an environment-conditioned MBR framework that addresses missing and unreliable auxiliary signals through two complementary modules conditioned on auxiliary observability.
Extensive experiments demonstrate that \proposed consistently outperforms state-of-the-art baselines, achieving up to 7.82\% gains in HR@10 overall and up to 44.2\% gains for target items without auxiliary observations, highlighting its ability to capture hidden preferences beyond observed auxiliary relations.
Our code is available at: \url{https://github.com/LSH0411/BOAR}.
\end{abstract}

\begin{CCSXML}
<ccs2012>
 <concept>
  <concept_id>10002951.10003317.10003347</concept_id>
  <concept_desc>Information systems~Recommender systems</concept_desc>
  <concept_significance>500</concept_significance>
 </concept>
</ccs2012>
\end{CCSXML}

\ccsdesc[500]{Information systems~Recommender systems}

\keywords{Multi-behavior recommendation, Graph Rewiring, Observation bias}
\newcommand{\proposed}{\textsc{BOAR}\xspace}
\newcommand{\smallsection}[1]{{\vspace{0.03in} \noindent \bf {#1}}}

\maketitle

\section{Introduction}

Users in modern recommender systems rarely express their preferences through a single type of behavior. 
Before making a purchase, users typically engage in multiple auxiliary behaviors, such as clicking items, adding them to carts, or saving them to wishlists. 
These auxiliary behaviors provide indirect yet complementary signals that reflect users’ latent preferences with respect to the final target behavior, such as purchasing. 
Accordingly, multi-behavior recommendation (MBR) aims to jointly model auxiliary behaviors together with the target behavior to better capture user preferences and improve recommendation accuracy~\cite{MBGCN,GHCF}.

In multi-behavior recommendation, a wide range of approaches has been explored, from traditional matrix factorization~\cite{CMF, MF-BPR} to deep neural network-based models~\cite{NMTR}. 
Among these approaches, graph neural networks (GNNs) have recently emerged as a core paradigm by explicitly modeling the relational structure between users and items, rather than treating interactions independently~\cite{NGCF}.
By representing different behavior types as distinct relations on graphs, GNNs systematically propagate auxiliary behavior signals to enhance target behavior prediction~\cite{MBGCN}.
This relational modeling is particularly effective in real-world settings, where target behaviors are extremely sparse.
Furthermore, combined with advanced learning paradigms such as self-supervised and multi-task learning, recent methods learn richer and more generalizable representations and achieve state-of-the-art performance~\cite{S-MBRec,MBSSL,COPF}.

\begin{figure*}[t]
    \centering
    \includegraphics[width=\textwidth]{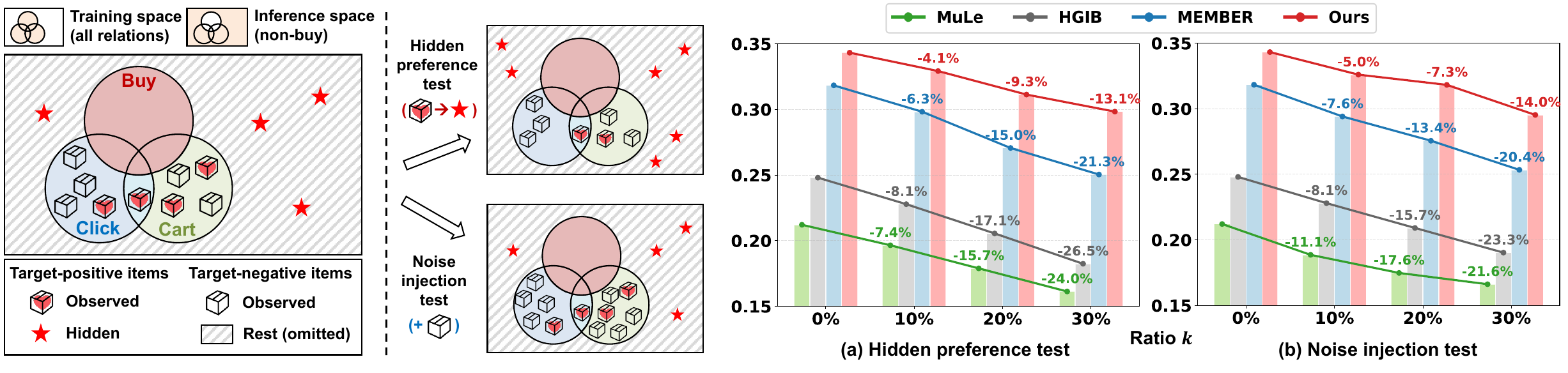}
    \caption{Overview of MBR setup.
    The left panel illustrates training and inference spaces.
    The right panel presents an empirical evaluation with two settings, reporting HR@10:
    (a) Hidden preference test, removing auxiliary interactions from observed target-positive items; and
    (b) Noise injection test, adding auxiliary interactions to target-negative items.
    Target-positive and target-negative items are defined with respect to the target behavior.
    Detailed setups are provided in Section~\ref{subsub:robust}.
    }
    \label{fig:figure1}
\end{figure*}

\label{sec:1}
Although effective, existing GNN-based methods still face two fundamental challenges arising from the nature of auxiliary behaviors.
Specifically, auxiliary signals are often missing or unreliable: their absence does not necessarily indicate the absence of user interest, while their presence does not necessarily indicate~target~intent.
\begin{itemize}[leftmargin=*, labelsep=0.5em, topsep=2pt, itemsep=0pt]
    \item \textbf{C1: Missingness of Auxiliary Signals.}
    In practice, users do not necessarily exhibit auxiliary behaviors for all items they may purchase.
    Importantly, such missingness is \textit{not at random}~\cite{li2023stabledr,schnabel2016recommendations}: auxiliary behaviors are observed for a skewed subset of items, driven by factors such as platform exposure policies and item popularity, rather than solely reflecting the absence of user interest.\footnote{The observability of interactions is influenced by multiple factors. Decomposing these causes is not our focus, and we collectively refer to this skew as observation bias~\cite{Bias_survey}.}
    GNNs propagating over such biased relations may overrepresent frequently exposed items with many interactions~\cite{APDA}.
    However, MBR must generalize beyond observed auxiliary relations and accurately recommend target behavior items without any auxiliary behaviors.
    This is illustrated by Figure~\ref{fig:figure1} (left).
    The training space defined by observed interactions covers only a limited subset of the inference space, while \textit{hidden target items}, i.e., target-positive items without auxiliary interactions, may appear outside the training space.
    Failure to capture such hidden preferences confines recommendations to items that users have already encountered, limiting their practical effectiveness.
    \end{itemize}
    
\begin{itemize}[leftmargin=*, labelsep=0.5em, topsep=0pt, itemsep=0pt]
    \item \textbf{C2: Unreliability of Auxiliary Signals.}
    Observed auxiliary behaviors do not necessarily imply purchase intent.
    While auxiliary behaviors may indicate potential interest, they can also be triggered by accidental clicks or casual browsing.
    As a result, auxiliary behaviors contain noisy signals that are misaligned with the target behavior.
    GNNs are structurally vulnerable to such noise because message passing can propagate and amplify incorrect information~\cite{10.1145/3488560.3498408,NoisyCL}.
    In MBR, auxiliary relation-driven propagation may amplify signals loosely related to the target behavior, thereby hindering target behavior prediction.
    Figure~\ref{fig:figure1} (left) illustrates this issue:
    not all items with auxiliary behaviors are target-positive items that eventually lead to purchase, as auxiliary interactions can also involve target-negative items.
\end{itemize}



\noindent
While prior methods partially address these challenges, they remain insufficient to fully resolve them, and to our knowledge, no existing method addresses both simultaneously.
Regarding \textbf{C1}, MEMBER~\cite{MEMBER} introduces self-supervised learning to reduce over-reliance on observed auxiliary interactions;
however, it still relies solely on the observed auxiliary signals, without explicitly uncovering hidden preferences.
Regarding \textbf{C2}, several studies~\cite{MuLe, HGIB} regulate message passing to suppress noise propagation;
however, the regulation is only weakly guided by the target behavior, which limits its effectiveness in filtering unreliable auxiliary signals.
Indeed, in Figure~\ref{fig:figure1} (right), we evaluate recent GNN-based methods relevant to each challenge (MEMBER~\cite{MEMBER} for C1, and MuLe~\cite{MuLe}, HGIB~\cite{HGIB} for C2) under settings that intensify each challenge by increasing the ratio of hidden target items and auxiliary noise, respectively.
Existing methods show substantial performance degradation, suggesting limited capability to handle challenges arising from auxiliary~behaviors.



As a solution, we propose \textbf{\proposed} (\underline{\textbf{B}}eyond \underline{\textbf{O}}bserved \underline{\textbf{A}}uxiliary \underline{\textbf{R}}elations), a framework that rewires given auxiliary relations into target-informative signals instead of using them as-is.
This rewiring requires different strategies depending on auxiliary observability:
when auxiliary observability is lacking, C1 calls for \textit{debiased densification} to surface hidden preferences, whereas with strong auxiliary observability, C2 calls for \textit{selective pruning} to filter unreliable signals.
Although both challenges could in principle be addressed within a single model, doing so may yield suboptimal signals under training data imbalance:
target interactions without auxiliary interactions form only a small fraction (e.g., $16.1\%$ in Tmall), causing the model to focus overly on auxiliary-observed cases and limiting hidden preference discovery (see Section~\ref{subsec:env_modeling} for detailed analysis).
This naturally motivates an environment-conditioned modular design:
\proposed employs two lightweight modules that focus on complementary rewiring strategies, i.e., recovering hidden preferences and pruning unreliable signals.
To coordinate these specialized modules, \proposed uses auxiliary observability as an environment condition and probabilistically adjusts their contributions to each target behavior prediction, addressing the two challenges in a complementary manner.
Our main contributions are:
\begin{itemize}[leftmargin=*] 
    \item We highlight two fundamental challenges of auxiliary behaviors that have not yet been fully addressed and, to our knowledge, present the first attempt to jointly address both.

    \item We propose \proposed, an environment-conditioned modular MBR framework that overcomes the limitations of observed auxiliary relations through conditional modular modeling.
    
    \item Extensive experiments show that \proposed delivers consistent gains, especially for target behavior without auxiliary observations, with efficiency comparable to state-of-the-art baselines.
\end{itemize}
\vspace{-\topsep}



\section{Related Works}
\noindent
\textbf{Multi-Behavior Recommendation (MBR).}
MBR leverages diverse auxiliary behaviors, such as clicks and add-to-cart, to alleviate the data sparsity of target behaviors like purchases. 
Early studies explored various techniques, including matrix factorization~\cite{MF-BPR, CMF}, deep neural networks~\cite{NMTR}, and attention-based approaches~\cite{DIPN, MATN, KHGT}.
Recently, GNN-based methods have emerged as the dominant approach~\cite{MBGCN,MBGMN,KHGT,CML,MBSSL}.
Existing methods adopt diverse graph encoding strategies, 
including integrating multiple behavior types into a unified graph~\cite{MBGCN,MGNN,MBGMN,GHCF,KHGT},
explicitly modeling natural behavioral sequences (e.g., click→cart→purchase)~\cite{CRGCN,MBCGCN,PKEF,MPC}, 
and encoding each behavior graph in parallel~\cite{MBHGCN,MuLe,HPMR,HGIB,MEMBER}.
Furthermore, advanced training techniques such as self-supervised learning~\cite{S-MBRec,MBSSL,COPF,HGIB,MEMBER} and multi-task learning~\cite{CRGCN,MBSSL,PKEF,HPMR,MPC,COPF} have been incorporated to enhance target behavior~prediction.


However, reliance on auxiliary behaviors introduces the two aforementioned challenges, and recent GNN-based methods have partially addressed them.
For C1, MEMBER~\cite{MEMBER} adopts a mixture-of-experts framework to distinguish purchases with auxiliary behaviors from those without.
It employs self-supervised learning to reduce the dominance of purchases preceded by auxiliary behaviors, encouraging the model to better capture purchase patterns that occur independently of auxiliary signals.
For C2, MuLe~\cite{MuLe} introduces attention-based message passing to adjust the contributions of auxiliary behaviors and mitigate uncertainty during propagation, 
while HGIB~\cite{HGIB} applies the information bottleneck principle to filter out information weakly related to the final prediction.

Despite these advancements, substantial room for improvement remains in addressing each challenge, and to our knowledge, no prior method has addressed both challenges simultaneously.
Specifically, no dedicated effort has been made to explicitly handle bias or uncover hidden target items, limiting the ability to address C1.
Meanwhile, existing methods control propagation primarily based on similarity induced by auxiliary behaviors, with limited guidance from target behaviors, which remains insufficient to resolve C2.

\smallsection{Graph Rewiring for Imperfect Graphs.}
In MBR, the two aforementioned challenges are fundamentally manifested as structural imperfections in auxiliary behavior graphs.
Graph rewiring (GR) is a core methodology in graph structure learning that reconstructs edge connections to address such imperfections~\cite{linkerhagner2025joint,gao2024graph,ding2022data, HDHGR}.
GR methods typically jointly optimize graph structure and node representations to recover missing edges and mitigate unreliable connections.
A common approach infers potential connections based on learned node embedding similarities, followed by pruning unreliable edges or adding new ones~\cite{EGLN,RGCF,SHaRe,bi2024make}.
These techniques have proven effective across domains, including single-behavior recommendation~\cite{EGLN, RGCF}, graph classification~\cite{bi2024make}, and social recommendation~\cite{SHaRe}.
Such improvements are attributed to pruning unreliable connections and strengthening homophilic relationships, enhancing graph quality.

However, applying existing GR strategies to MBR may be suboptimal, as auxiliary behavior graphs contain imperfections tied to auxiliary observability.
In particular, conventional similarity-based rewiring may amplify these imperfections:
densification may favor items with abundant auxiliary interactions, failing to surface hidden preferences without auxiliary evidence (C1), while pruning may mistakenly remove informative auxiliary connections as noise (C2).
This calls for an auxiliary observability-conditioned graph rewiring strategy tailored to the unique challenges of MBR.

\section{Preliminaries}
\subsection{Problem Formulation}
Let $\mathcal{U}$ and $\mathcal{I}$ denote the sets of users and items. 
We consider a set of behaviors $\mathcal{B}=\{b_1,\ldots,b_{|\mathcal{B}|}\}$ (e.g., click, cart, ... , buy), where $b_{\mathrm{tar}}=b_{|\mathcal{B}|}$ is the target behavior and $\mathcal{B}\setminus\{b_{\mathrm{tar}}\}$ are auxiliary behaviors.
For each behavior $b\in\mathcal{B}$, we define a bipartite graph $\mathcal{G}_b=(\mathcal{U}\cup\mathcal{I},\mathcal{E}_b)$, where $\mathcal{E}_b\subseteq\mathcal{U}\times\mathcal{I}$ denotes the observed interactions.
Let $\mathcal{E}_{\text{aux}}=\bigcup_{b\in \mathcal{B}\setminus\{b_{\mathrm{tar}}\}}\mathcal{E}_b$ denote the set of all observed auxiliary interactions.

For each pair $(u,i)$, we define a binary random variable $R_{ui}\in\{0,1\}$ indicating whether user $u$ engages in the target behavior on item~$i$.
Our goal is to learn a function $f(u,i)$ that estimates the probability that a user $u$ exhibits the target behavior with item $i$:
\begin{equation}
f(u,i)\ \approx\ \mathbb{E}\!\left[R_{ui}\mid u,i\right]\ =\ P(R_{ui}=1\mid u,i).
\end{equation}
We focus on addressing the aforementioned challenges, \textbf{C1}: missing auxiliary signals and \textbf{C2}: unreliable auxiliary signals, arising from the nature of auxiliary behaviors.

\begin{figure}[t]
    \centering
    \includegraphics[width=\columnwidth]{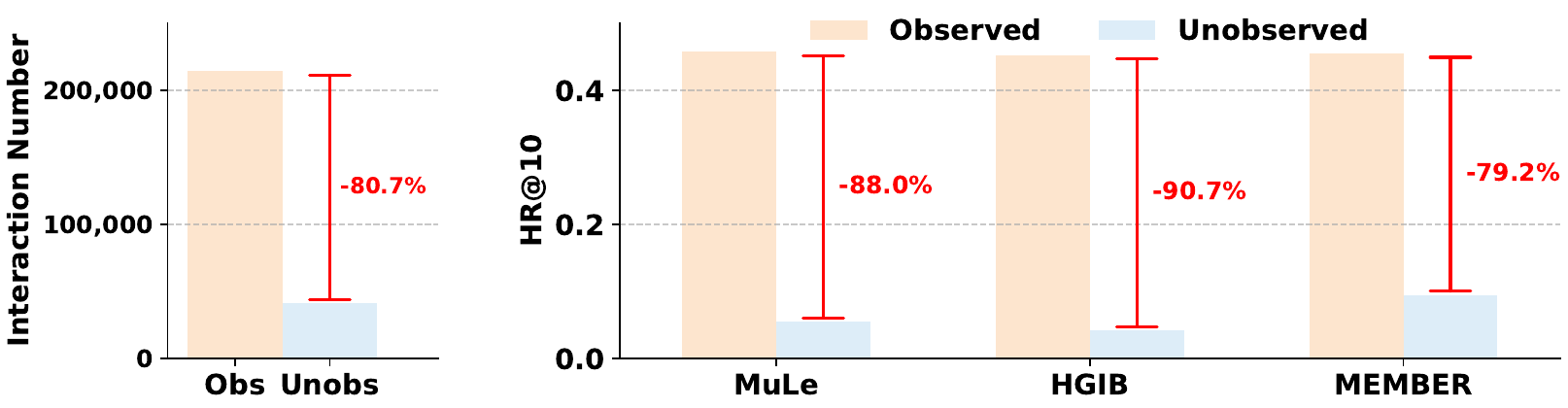}
    \caption{Training-set imbalance between target interactions with (Obs) and without (Unobs) auxiliary observations (left), and the resulting performance gap between the two groups (right). Results are reported on the Tmall dataset.}
    \label{fig:figure0}
    \vspace{-2em}
\end{figure}

\subsection{Environment-Conditioned Modeling}
\label{subsec:env_modeling}
As discussed above, the two challenges require distinct graph rewiring strategies: 
densifying the target behavior graph to recover hidden target items when auxiliary evidence is lacking, and pruning the auxiliary behavior graphs to suppress noise misaligned with target behavior when auxiliary evidence is abundant.

\noindent
\textbf{Empirical Evidence.}
Figure.~\ref{fig:figure0} (left) shows that among target interactions, those with auxiliary interactions substantially outnumber those without.
This imbalance can make GNN-based models overly focus on items with auxiliary observations.
Indeed, Figure.~\ref{fig:figure0} (right) shows that even state-of-the-art methods consistently suffer a significant performance drop on hidden target items.
This suggests that a single model may struggle to derive rewiring signals suited to both cases, particularly for hidden preference discovery.
This motivates a design that incorporates auxiliary observability, so that rewiring signals can be effectively derived for both auxiliary-observed and -unobserved cases.

\noindent
\noindent
\textbf{Environment-Conditioned Modeling.}
The above observations motivate a modular formulation in which target preference estimation is conditioned on auxiliary observability.
By the law of total expectation, the target preference can be decomposed as:
\begin{equation}
\label{eq:total_expectation}
\mathbb{E}[R_{ui}\mid u,i]
= \sum_{e\in\{0,1\}} P(E_{ui}{=}e\mid u,i)\ \mathbb{E}[R_{ui}\mid u,i,E_{ui}{=}e],
\end{equation}
where $E_{ui}$ denotes the auxiliary-observability environment of the pair $(u,i)$.
In the interaction log, the presence of auxiliary interactions provides an observable basis for this condition: $E_{ui}=0$ for $(u,i)\notin \mathcal{E}_{\text{aux}}$ and $E_{ui}=1$ for $(u,i)\in \mathcal{E}_{\text{aux}}$.
Note that $E_{ui}=0$ only means that no auxiliary behavior is observed in the current log; such pairs may become observable as user activity continues. 
Thus, the final prediction uses $P(E_{ui}{=}e\mid u,i)$ as a soft weight, rather than relying solely on the observed environment information.

Accordingly, we model the conditional expectations with two environment-conditioned sub-modules:
\begin{equation}
\begin{aligned}
f_0(u,i) &\approx \mathbb{E}[R_{ui}\mid u,i,E_{ui}{=}0],\\
f_1(u,i) &\approx \mathbb{E}[R_{ui}\mid u,i,E_{ui}{=}1].
\end{aligned}
\end{equation}
Here, $f_0$ focuses on rewiring under the auxiliary-unobserved condition, where hidden preferences should be recovered.
In contrast, $f_1$ focuses on rewiring under the auxiliary-observed condition, where unreliable auxiliary relations should be filtered.

The final prediction is obtained by softly aggregating the conditional predictions as $f(u,i)
= \sum_{e\in\{0,1\}} P(E_{ui}=e \mid u,i)\, f_e(u,i)$.
Here, $P(E_{ui}=e \mid u,i)$ controls the contribution of each sub-module.
This modular formulation enables coordination of complementary rewiring strategies across auxiliary-observability conditions, while behavior-specific importance is further taken into account within the modules as appropriate.
The concrete instantiation is introduced in the subsequent section.

\section{Proposed Method}
\begin{figure*}[t]
    \centering
    \includegraphics[width=\textwidth]{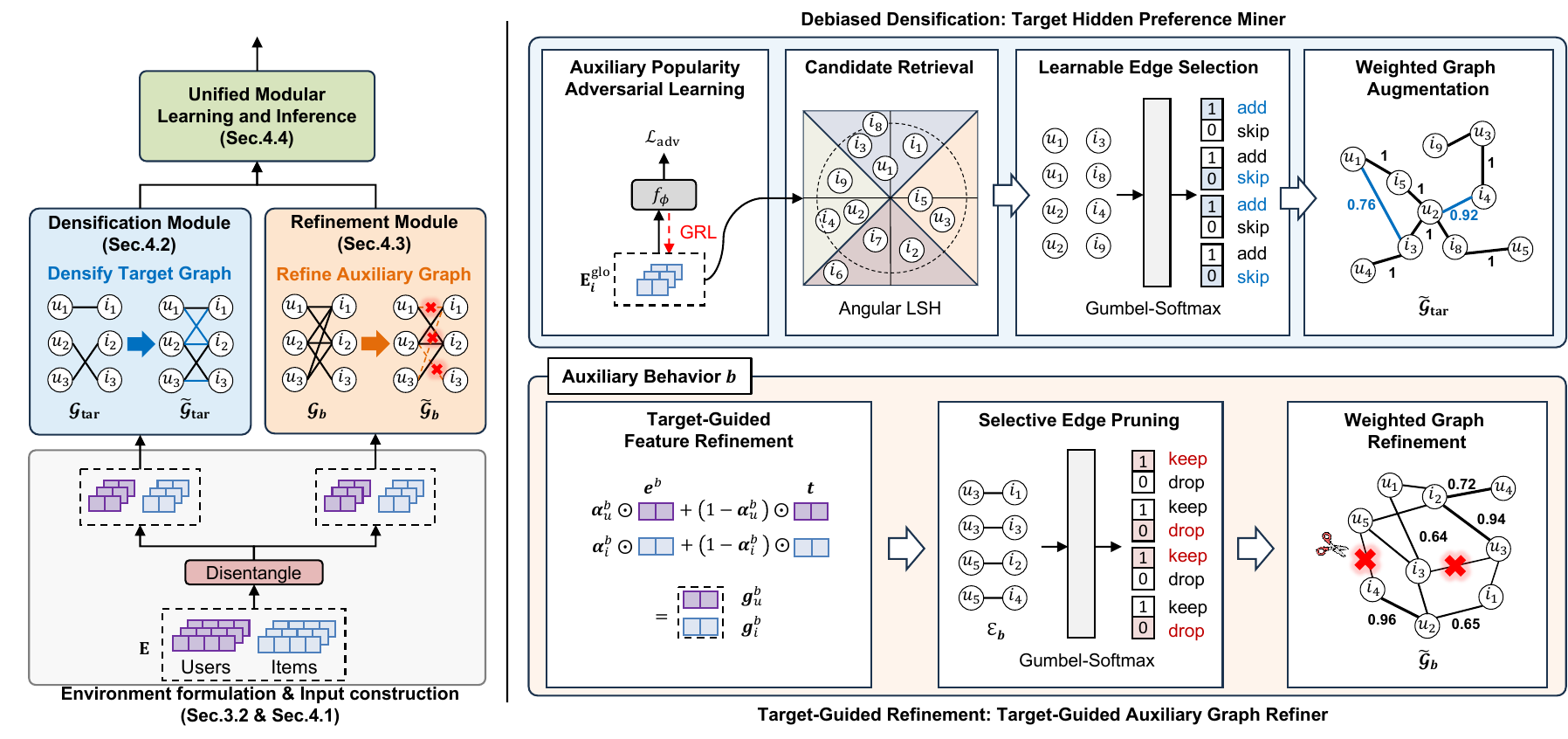}
    \caption{(Left) Overview of the \proposed framework. (Right) Construction of densification and refinement signals in each module, which are subsequently incorporated via self-supervised contrastive learning. Best viewed in color.}
    \label{fig:overview_method}
\end{figure*}

We present BOAR, an environment-conditioned MBR framework (Figure~\ref{fig:overview_method}).
\proposed first constructs disentangled input representations (Sec.~4.1), and then applies two modules, with distinct graph rewiring and learning strategies conditioned on the environment:
\begin{itemize}[leftmargin=*, labelsep=0.5em, topsep=3pt, itemsep=3pt]
    \item \textbf{(Sec.4.2) Debiased densification module $f_0$} identifies hidden target items by mining hidden preferences while debiasing auxiliary relations, providing densification signals.

    \item \textbf{(Sec.4.3) Target-guided refinement module $f_1$} prunes auxiliary relations misaligned with the target behavior, refining auxiliary graphs toward target-consistent relations.
\end{itemize}
Section~4.4 describes modular integration for learning and inference.


\subsection{\textbf{Input Representation Construction}}
\label{sec:unobserved_base}
We construct input user/item representations using both the global graph and the target-behavior graph via a LightGCN~\cite{LightGCN} encoder.
Let $\mathbf{E}\in\mathbb{R}^{(|\mathcal{U}|+|\mathcal{I}|)\times d}$ denote the initial embedding matrix that stacks user and item embeddings. 
We obtain:
\begin{equation}
\mathbf{E}^{\mathrm{glo}} = \mathrm{LGCN}\!\left(\mathbf{A}_{\mathrm{glo}}, \mathbf{E}\right), \qquad
\mathbf{E}^{\mathrm{tar}} = \mathrm{LGCN}\!\left(\mathbf{A}_{\mathrm{tar}}, \mathbf{E}\right),
\end{equation}
where $\mathbf{A}_{\mathrm{glo}}$ denotes the adjacency matrix of the global graph $\mathcal{G}_{\mathrm{glo}}$ constructed from all behaviors, i.e., $\mathcal{E}_{\mathrm{glo}}=\bigcup_{b\in\mathcal{B}} \mathcal{E}_b$,
and $\mathbf{A}_{\mathrm{tar}}$ denotes the adjacency matrix of the target behavior graph.
The global embeddings capture comprehensive multi-behavior signals and provide warm-start features for subsequent behavior-specific encoding~\cite{MuLe,HGIB,MBR_survey}. 
In each module, the global embedding $\mathbf{E}^{\mathrm{glo}}$ serves as a source of densification ($f_0$) or refinement ($f_1$) signals, while the target embedding $\mathbf{E}^{\mathrm{tar}}$ is used for final prediction.
Note that, to enable environment-conditioned modeling without increasing model capacity, we split the initial embedding into two halves and encode each half with both the global and target graph encoders.

\label{sec:unobserved}

\subsection{Debiased Densification Module}
The densification module $f_0$ mines hidden preferences to provide densification signals for the target behavior graph, and consists of three components:
(1) \textit{auxiliary popularity adversarial learning} to adversarially suppress popularity-driven observation bias from item representations,
(2) \textit{target hidden preference miner} to efficiently discover hidden preferences,
and (3) \textit{debiased densification learning} to safely inject the mined knowledge into the final prediction.





\subsubsection{\textbf{Auxiliary Popularity Adversarial Learning.}}
To reduce popularity-driven observation bias before hidden preference mining, we adapt the adversarial debiasing~\cite{Zhu0WW25,advdrop,fairrec}.
The key idea is to prevent item representations from overly encoding auxiliary behavior popularity, so that they can better reflect user preference.

This is achieved through adversarial learning with a gradient reversal layer (GRL)~\cite{ganin2016domain}.
Following~\cite{Zhu0WW25}, we construct a proxy signal for popularity-driven bias from the total number of auxiliary interactions each item receives:
$y_i = 1$ if the count exceeds the median interaction count across all items, and $y_i = 0$ otherwise.
This signal construction strategy provides practical and balanced adversarial supervision.\footnote{While more complex signals could be considered, prior work has shown that this median-based strategy can effectively suppress popularity-related information~\cite{Zhu0WW25}.}
We then define the adversarial supervision set as $\mathcal{S}_{\mathrm{adv}} = \{(i, y_i)\}$.

A discriminator $f_{\phi}$, implemented as a linear layer with a sigmoid function, is trained to predict the proxy signal $y_i$ of each item $i$.
Since popularity is an item-level property, the discriminator takes the item embedding as input and outputs $\hat{y}_i = \sigma(f_{\phi}(\mathrm{GRL}(\mathbf{e}^{\mathrm{glo}}_i)))$, where $\sigma$ is the sigmoid function.
The discriminator is trained adversarially via the GRL, such that the item representations are updated to hinder accurate popularity signal prediction:
\begin{equation}
\label{eq:adv}
\begin{gathered}
\mathcal{L}_{adv} = \mathbb{E}_{(i,y_i) \sim 
\mathcal{S}_{\mathrm{adv}}} \left[ -y_i \log \hat{y}_i - (1-y_i) \log (1-\hat{y}_i) \right]
\end{gathered}
\end{equation}
This adversarial objective reduces the influence of popularity information in item representations, providing a basis for hidden preference mining.




\subsubsection{\textbf{Target Hidden Preference Miner.}}
\label{subsubsec:thpm}
Based on the popularity-suppressed representations, we uncover potential target relations.
Existing graph rewiring methods~\cite{SHaRe,HDHGR} typically rely on exhaustive pairwise similarity computation to identify highly similar node pairs, incurring prohibitive costs.
To address this, we propose an efficient three-stage mining strategy: 
(i) locality-sensitive hashing (LSH)-based candidate retrieval, 
(ii) learnable edge selection, and (iii) weighted graph augmentation.



\smallsection{Angular LSH-based Candidate Retrieval.}
To efficiently retrieve candidate items, we adopt angular LSH ~\cite{lsh,reformer}, which enables fast approximate nearest-neighbor search based on angular similarity.
Unlike traditional LSH, angular LSH constructs hash buckets using a single random projection followed by an argmax operation, which can be efficiently computed on GPUs via matrix multiplication.
Moreover, its collision probability is monotonically related to angular similarity rather than L2 distance, making it better aligned with the inner product–based prediction.

We first draw a random projection matrix $\mathbf{R} \in \mathbb{R}^{d \times d'}$ with i.i.d. entries $R_{pq} \sim \mathcal{N}(0,1)$, where $d'$ is the number of hash buckets and each column defines a random angular direction in the embedding space.
For each node $x$, we normalize its representation and project it onto these directions, obtaining projection scores $\mathbf{s}_x$ that reflect its angular alignment.\footnote{Concretely, we compute $\mathbf{s}_x = [ (\mathbf{e}^{\mathrm{glo}}_x / |\mathbf{e}^{\mathrm{glo}}_x|)^\top \mathbf{R} \, \Vert \, (\mathbf{e}^{\mathrm{glo}}_x / |\mathbf{e}^{\mathrm{glo}}_x|)^\top (-\mathbf{R}) ] \in \mathbb{R}^{2d'}$.
The concatenation with $-\mathbf{R}$ allows us to capture both positive and negative angular directions using a single argmax operation, resulting in $2d'$ effective buckets~\cite{reformer}.}
Each node is then assigned to a hash bucket corresponding to the dominant projection direction as: $h(x) = \arg\max_k (\mathbf{s}_x)_k$.
This procedure ensures that nodes with similar angular directions are likely to be grouped into the same bucket, without explicitly computing pairwise similarities.

For each user $u$, we find unobserved items from the same bucket, i.e., $\{ i \mid h(i) = h(u)\}$, and retrieve top-$n$ items by cosine similarity to form the candidate set $\mathcal{E}_{\mathrm{cand}}(u)$.
The total candidates across all users are aggregated as:
$\mathcal{E}_{\mathrm{cand}} = \bigcup_{u \in \mathcal{U}} \{ (u,i) \mid i \in \mathcal{E}_{\mathrm{cand}}(u) \}$.

\smallsection{Learnable Edge Selection.}
Not all candidates are equally reliable for target behavior prediction.
To selectively retain edges truly aligned with this objective, we adopt a learnable edge selection strategy jointly optimized with the prediction task.

For each candidate pair $(u,i) \in \mathcal{E}_{\mathrm{cand}}$, we predict a binary add-or-skip decision using $f_{\mathrm{add}}$, implemented as a linear layer.
Specifically, we compute logits $\mathbf{z}^{\mathrm{add}}_{ui} = f_{\mathrm{add}}([\mathbf{e}^{\mathrm{glo}}_u \parallel \mathbf{e}^{\mathrm{glo}}_i]) \in \mathbb{R}^2$, where each output dimension corresponds to add and skip decisions, respectively.
We apply the Gumbel-Softmax~\cite{gumbel} to obtain differentiable selection~mask:\footnote{
$\mathrm{GumbelSoftmax}(\mathbf{z}; \tau)_k = \frac{\exp((z_k + g_k)/\tau)}{\sum_j \exp((z_j + g_j)/\tau)}$,
where $g_k = -\log(-\log(u_k))$ with $u_k \sim \mathrm{Uniform}(0,1)$.
As $\tau \rightarrow 0$, the output approaches a one-hot discrete sample.
}
\begin{equation}
\begin{split}
m^{\mathrm{add}}_{ui} &= \mathrm{GumbelSoftmax}(\mathbf{z}^{\mathrm{add}}_{ui}; \tau)[0] \in \{0,1\}, \\
\end{split}
\end{equation}
The selection mask is learned via backpropagation using the Gumbel-Softmax relaxation, favoring edges beneficial for target behavior prediction while suppressing misaligned ones.
We guide readers unfamiliar with the Gumbel-Softmax~to~\cite{gumbel}.


\smallsection{Weighted Graph Augmentation.}
We construct the densified target graph using the selected edges.
To reflect their varying importance, each edge is assigned a weight $w_{ui}$ that combines angular similarity and magnitude consistency~\cite{chen2024dualdomaincollaborativedenoisingsocial}:
\begin{equation}
w_{ui} = \frac{1}{2}\Big(1 + \cos(\mathbf{e}^{\mathrm{glo}}_u, \mathbf{e}^{\mathrm{glo}}_i)\Big) \cdot \exp\!\left(-\frac{1}{2\sigma^2} \|\mathbf{e}^{\mathrm{glo}}_u - \mathbf{e}^{\mathrm{glo}}_i\|_2^2\right).
\label{eq:edge_weight} 
\end{equation}
Original target edges are retained with unit weight, while selected candidates are added with weight $w_{ui}$, where $\sigma$ controls the sensitivity to the Euclidean distance between embeddings, and is set to 20 following~\cite{chen2024dualdomaincollaborativedenoisingsocial}.
We define the weighted adjacency matrix $\tilde{\mathbf{A}}_{\mathrm{tar}}$ as:
\begin{equation}
\tilde{\mathbf{A}}_{\mathrm{tar}}(u,i)=
\begin{cases}
1, & (u,i)\in\mathcal{E}_{\mathrm{tar}},\\
w_{ui}, & (u,i)\in\mathcal{E}_{\mathrm{cand}} \ \,\, \text{and}\,\, \ m^{\mathrm{add}}_{ui}=1,\\
0, & \text{otherwise}.
\end{cases}
\end{equation}
The matrix $\tilde{\mathbf{A}}_{\mathrm{tar}}$ is symmetric, as $w_{ui} = w_{iu}$ by construction.


\subsubsection{\textbf{Debiased Densification Learning}}
While the augmented graph reveals hidden preferences, it inevitably contains noise from the mining process, making direct use for final prediction suboptimal.
We propose a strategy to robustly exploit densification~signals.

\smallsection{Preference Densification Learning.}
We adopt self-supervised contrastive learning with an InfoNCE loss~\cite{oord2018representation} to encourage consistency between embeddings from the original and augmented graphs.
This alignment emphasizes signals consistently supported by both views, enabling a more stable learning than directly treating augmented edges as ground-truth relations.
Specifically, we align the original target embeddings $\mathbf{E}^{\mathrm{tar}}$ (input of this module) with the augmented embeddings $\tilde{\mathbf{E}}^{\mathrm{tar}} = \mathrm{LGCN}(\tilde{\mathbf{A}}_{\mathrm{tar}}, \mathbf{E}^{\mathrm{glo}})$.
The user-side loss for densification is defined as:
\begin{equation}
\mathcal{L}^{user}_{dense}
= - \sum_{u\in\mathcal{U}}
\log
\frac{\exp\big(\mathrm{sim}(\mathbf{e}^{\mathrm{tar}}_{u},\tilde{\mathbf{e}}^{\mathrm{tar}}_{u})/\tau\big)}
{\sum_{u'\in\mathcal{U}} \exp\big(\mathrm{sim}(\mathbf{e}^{\mathrm{tar}}_{u},\tilde{\mathbf{e}}^{\mathrm{tar}}_{u'})/\tau\big)}.
\end{equation}
where $\mathrm{sim}(\cdot,\cdot)$ denotes a similarity function and $\tau$ is the temperature hyperparameter.
The item-side loss $\mathcal{L}^{item}_{dense}$ is defined analogously, and the overall objective is $\mathcal{L}_{dense}=\mathcal{L}^{user}_{dense}+\mathcal{L}^{item}_{dense}$.

\smallsection{Debiased BPR Loss.}
While auxiliary popularity adversarial learning mitigates popularity-driven bias at the representation level, it does not fully eliminate observation bias in preference learning.
For a more explicit debiasing, we adopt the self-normalized inverse propensity score (SNIPS)~\cite{schnabel2016recommendations} to reweight positive interactions in the ranking loss.
SNIPS reduces the variance of the IPS estimator by normalizing importance weights, thereby stabilizing optimization even under imperfect propensity estimates.



For each $(u,i)$, we first estimate the auxiliary-observation propensity $\hat{p}_{ui} = \hat{P}(E_{ui}=1 \mid u,i)$, which measures how likely the pair is to be observed in auxiliary behaviors.\footnote{Following~\cite{schnabel2016recommendations, li2023stabledr, Zhu0WW25}, 
we adopt a simple propensity estimator trained via binary cross-entropy on auxiliary-observed (positive) and -unobserved (negative) pairs: 
$\mathcal{L}_{\mathrm{BCE}} = \frac{1}{|\mathcal{D}|}
\sum_{(u,i)\in\mathcal{D}} \mathrm{BCE}(\hat{p}_{ui},\, E_{ui})$.
Details are provided in Appendix~\ref{app:propensity}.}
The debiasing weight is defined by the normalized inverse propensity, $\omega_{ui} \propto 1/\hat{p}_{ui}$, which reduces estimator variance.
Intuitively, $\omega_{ui}$ assigns smaller weights to pairs with high auxiliary observation propensity and larger weights to those with low propensity, correcting the skew induced by observation bias.
Let $s_{ui} = {\mathbf{e}}^{\mathrm{tar}}_{u}{}^{\top}{\mathbf{e}}^{\mathrm{tar}}_{i}$ denote the prediction score for target behavior. 
The debiased BPR loss is:
\begin{equation}
\label{eq:dbpr}
\mathcal{L}_{rank}
= - \sum_{(u,i)\in\mathcal{E}_{\mathrm{tar}}}
  \sum_{j:\,(u,j)\notin\mathcal{E}_{\mathrm{tar}}}
  \omega_{ui} \cdot \log \sigma(s_{ui} - s_{uj})
\end{equation}

\smallsection{Overall Learning Objective.}
The total loss of $f_0$ is as follows:
\begin{equation}
\mathcal{L}_{f_0}
= \mathcal{L}_{rank}
+ \lambda_{adv} \mathcal{L}_{adv}
+ \lambda_{dense} \mathcal{L}_{dense},
\end{equation}
where $\lambda_{adv}$ and $\lambda_{dense}$ are loss-balancing hyperparameters.

Theoretical grounding for $f_0$ is provided in Appendix~\ref{app:ips-theory}: \textit{(i)} Theorem~\ref{thm:ips} establishes unbiasedness of the IPS-weighted loss;
\textit{(ii)} Lemma~\ref{lem:hdiv} shows that the GRL objective suppresses popularity-driven bias from item representations; and \textit{(iii)} Corollary~\ref{cor:grl-ips} shows that both are jointly necessary for hidden preference recovery.

\label{sec:observed}
\subsection{Target-Guided Refinement Module}
The refinement module $f_1$ leverages target behavior signals to prune the auxiliary behavior graphs, and consists of two components:
(1) a \textit{target-guided auxiliary graph refiner}, which selectively prunes and reweights auxiliary relations, and
(2) \textit{target-guided preference learning} that provides target-aligned signals for auxiliary refinement and optimizes target behavior prediction.

\subsubsection{\textbf{Target-guided Auxiliary Graph Refiner}}

We perform target-guided refinement on each auxiliary behavior graph $\mathcal{G}_{b}$ using a behavior-specific LGCN encoder, with input auxiliary embeddings $\mathbf{e}^{b}=\mathbf{e}^{\mathrm{glo}}$.
The target embeddings $\mathbf{E}^{\mathrm{tar}}$ serve as fixed anchors, and for notational simplicity, we denote $\mathbf{t}_{x}=\mathrm{detach}(\mathbf{e}^{\mathrm{tar}}_{x})$.

\smallsection{Target-Guided Feature Refinement.}
First, to handle feature-level misalignment, we introduce a feature-wise gate that adaptively balances auxiliary representations with target anchors.
For each node $x$, the gate $\boldsymbol{\alpha}^{b}\in(0,1)^d$ determines, at each feature dimension, how much to rely on the auxiliary signals versus the target anchor.
The gate is computed by conditioning on both representations:
\begin{equation}
\begin{aligned}
    \boldsymbol{\alpha}^{b}_{x}
    &= \sigma \Big( \mathbf{W}^{b}_{1}\big[\mathbf{e}^{b}_{x}\,\|\,\mathbf{t}_{x}\big] + \mathbf{b}^{b}_{1} \Big)
    \in (0,1)^{d}\\
    \mathbf{g}^{b}_{x} &= \boldsymbol{\alpha}^{b}_{x}\odot \mathbf{e}^{b}_{x}
    + \big(\mathbf{1}-\boldsymbol{\alpha}^{b}_{x}\big)\odot \mathbf{t}_{x},
\end{aligned}
\end{equation}
where $x\in\mathcal{U}\cup\mathcal{I}$.
This adaptive interpolation selectively filters auxiliary features inconsistent with the target preference space.


\smallsection{Selective Edge Pruning.}
Beyond feature-level refinement, we further refine the auxiliary graph structure by selectively pruning uninformative edges.
The goal is to prevent auxiliary edges inconsistent with the target preference from propagating noise.
For each edge $(u,i)\in\mathcal{E}_{b}$, we predict a binary keep-or-drop decision using $f_{\mathrm{drop}}$, implemented as a linear layer.
Specifically, we compute logits $\mathbf{z}^{b}_{ui} = f_{\mathrm{drop}}([\mathbf{g}^{b}_{u}\ \Vert\ \mathbf{g}^{b}_{i}]) \in \mathbb{R}^2$, where each output dimension corresponds to keep and drop decisions, respectively.
We apply Gumbel-Softmax to obtain differentiable pruning:
\begin{equation}
m^{b}_{ui} = \mathrm{GumbelSoftmax}(\mathbf{z}^{b}_{ui}; \tau)[0] \in \{0,1\},
\end{equation}
Optimized by the target-guided learning introduced in the subsequent subsection, edges misaligned with the target behavior are suppressed.
An edge is retained if $m^{b}_{ui}=1$; otherwise it is pruned.


\smallsection{Weighted Graph Refinement.}
We construct a refined adjacency matrix $\tilde{\mathbf{A}}_{b}$ 
by selecting a subset of edges with the weighting scheme in Eq.~\eqref{eq:edge_weight}, substituting refined representations $\mathbf{g}^{b}_{u}$, $\mathbf{g}^{b}_{i}$ for $\mathbf{e}^{\mathrm{glo}}_{u}$, $\mathbf{e}^{\mathrm{glo}}_{i}$:
\begin{equation}
\tilde{\mathbf{A}}_{b}(u,i)=
\begin{cases}
w^{b}_{ui}, & (u,i)\in\mathcal{E}_{b}',\\
0, & \text{otherwise},
\end{cases}
\end{equation}
where $\mathcal{E}_{b}'=\{(u,i)\in\mathcal{E}_{b}\mid m^{b}_{ui}=1\}$ 
denotes the set of retained edges after pruning.
The resulting matrix $\tilde{\mathbf{A}}_{b}$ is symmetric, as $w^{b}_{ui} = w^{b}_{iu}$ by construction.
Given the refined adjacency matrix $\tilde{\mathbf{A}}_{b}$, we obtain propagated representations as:
$\tilde{\mathbf{E}}^{b} = \mathrm{LGCN}(\tilde{\mathbf{A}}_{b}, \mathbf{E}^{\mathrm{glo}})$.
The same refinement procedure is applied to each auxiliary behavior $b$.

\subsubsection{\textbf{Target-Guided Preference Learning}}
We introduce a target-guided preference learning that leverages contrastive alignment to provide supervisory signals for refining auxiliary graphs and strengthening target behavior prediction.

\smallsection{Auxiliary Refinement Learning.}
Specifically, we treat the target embedding $\mathbf{t}_{x}$ as a fixed anchor to prevent auxiliary noise from contaminating the target space, and align each auxiliary representation with it via contrastive learning.
The user-side refinement loss for auxiliary behavior $b$ is:
\begin{equation}
\mathcal{L}^{user,b}_{refine}
= - \sum_{u\in\mathcal{U}}
\log
\frac{\exp\big(\mathrm{sim}(\mathbf{t}_{u},\mathbf{e}^{b}_{u})/\tau\big)}
{\sum_{u'\in\mathcal{U}} \exp\big(\mathrm{sim}(\mathbf{t}_{u},\mathbf{e}^{b}_{u'})/\tau\big)}.
\end{equation}
The item-side loss is defined analogously, and we sum both losses over all auxiliary behaviors: $\mathcal{L}_{refine}=\sum_{b\in\mathcal{B}\setminus\{b_\mathrm{tar}\}} \mathcal{L}^{user,b}_{refine}+\mathcal{L}^{item,b}_{refine}$.

\smallsection{Target Preference Learning.}
We aggregate behavior-specific embeddings according to their relevance to the target behavior.
Let $a(\cdot): \mathbb{R}^{2d} \rightarrow \mathbb{R}$ denote an attention function, implemented as a linear layer, over the concatenation of two embeddings.
\begin{equation}
\begin{aligned}
\alpha_u^{b} =
\frac{\exp\!\big(a(\mathbf{e}^{\mathrm{tar}}_{u},\mathbf{e}^{b}_{u})\big)}
{\sum_{b'\in\mathcal{B}}\exp\!\big(a(\mathbf{e}^{\mathrm{tar}}_{u},\mathbf{e}^{b'}_{u})\big)}, \quad \mathbf{e}^{\mathrm{agg}}_{u} =
\sum_{b\in\mathcal{B}} \alpha_u^{b}\,\mathbf{e}^{b}_{u},
\end{aligned}
\end{equation}
Based on the aggregated representations, we compute the preference score as
$s_{ui} = \mathbf{e}_{u}^{\text{agg}\top}\mathbf{e}_{i}^{\text{agg}}$,
and optimize the BPR loss on the target behavior: $\mathcal{L}_{rank} = - \sum_{(u,i)\in\mathcal{E}_{\mathrm{tar}}}  \sum_{j:(u,j)\notin\mathcal{E}_{\mathrm{tar}}} \log\sigma(s_{ui} - s_{uj})$.

\smallsection{Overall Learning Objective.}
The total loss of $f_1$ is as follows:
\begin{equation}
\mathcal{L}_{f_1} = \mathcal{L}_{rank} + \lambda_{refine} \mathcal{L}_{refine},
\end{equation}
where $\lambda_{refine}$ is a loss-balancing hyperparameter.



\subsection{Unified Modular Learning and Inference}
According to our environment-conditioned design (Sec.~\ref{subsec:env_modeling}), the final objective is expressed as a conditional expectation over module-wise objectives.
Under this formulation, the auxiliary-observation propensity $\hat{p}_{ui}=\hat{P}(E_{ui}=1\mid u,i)$ and its complement $1-\hat{p}_{ui}$ naturally serve as the assignment probabilities for the~two~modules:\footnote{With a slight abuse of notation, $\mathcal{L}_{f_e}(u,i)$ denotes the loss induced by $(u,i)$ under~$f_e$.}
\begin{equation}
\mathcal{L}_{BOAR}=\mathbb{E}_{(u,i)\in\mathcal{E}_{\mathrm{tar}}}
\!\left[(1-\hat{p}_{ui})\,\mathcal{L}_{f_0}(u,i) + \hat{p}_{ui}\,\mathcal{L}_{f_1}(u,i)\right]
\end{equation}
Thus, each target interaction is softly assigned to the two modules according to its auxiliary-observation likelihood.

An alternative design is to use a gating network, jointly optimized with the model, to assign module weights from user--item embeddings. 
However, we found that propensity-based assignment is more effective, as it is grounded in the environment signal of auxiliary observation rather than learned as an unconstrained gate.
We highlight that propensity estimation is well established, and its stability and robustness to noise have been extensively studied~\cite{rosenbaum1983central,schnabel2016recommendations,li2023stabledr,Zhu0WW25}. 

\noindent\textbf{Inference.}
Consistent with the training objective, the final prediction is obtained as:
\begin{equation}
\label{eq:inference}
f(u,i) = (1-\hat{p}_{ui})\, f_0(u,i) + \hat{p}_{ui}\, f_1(u,i).
\end{equation}
This soft assignment allows each instance to leverage complementary signals from both modules.
Alternatively, replacing $\hat{p}_{ui}$ with the hard indicator $E_{ui}$ yields a deterministic 0-1 assignment, where each instance is routed exclusively to one module and cannot benefit from the other.
We validate this design choice in Sec.~\ref{subsub:ablation}.




\section{Experiments}
\subsection{Experimental Settings}
\begin{table}[t]
  \centering
  \caption{Dataset statistics. Hidden (\%) denotes the proportion of hidden target items without auxiliary behaviors.}
  \label{tab:data_statistics}
  \begingroup
  \setlength{\tabcolsep}{4pt}
  \renewcommand{\arraystretch}{1.1}
  \small
  \begin{threeparttable}
  \resizebox{\linewidth}{!}{%
    \begin{tabular}{lccccccc}
      \toprule
      \textbf{Dataset} & \textbf{\#Users} & \textbf{\#Items} & \textbf{\#Clicks} & \textbf{\#Collects} & \textbf{\#Carts} & \textbf{\#Buys} & \textbf{Hidden (\%)\tnote{*}} \\
      \midrule
      \textbf{Tmall}  & 41,738 & 11,953 & 1,813,498 & 221,514 & 1,996   & 255,586 & 16.38 \\
      \textbf{Taobao} & 48,749 & 39,493 & 1,548,162 & --      & 193,747 & 211,022 & 35.78 \\
      \textbf{JData}  & 93,334 & 24,624 & 1,681,430 & 45,613  & 49,891  & 321,883 & 18.55 \\
      \bottomrule
    \end{tabular}%
  }
  \begin{tablenotes}
    \footnotesize
    \item[*] Computed over test interactions, distinct from the training-set ratio in Fig~\ref{fig:figure0}.
  \end{tablenotes}
  \end{threeparttable}
  \endgroup
\end{table}
\noindent\textbf{Datasets.}
We use three widely used benchmark MBR datasets: (1) \textbf{Tmall}, from Alibaba platform and involving four behaviors (click, collect, cart, and buy); 
(2) \textbf{Taobao}, from Taobao and consisting of three behaviors (click, cart, and buy); 
and (3) \textbf{JData}, released by JD.com and containing four behaviors (click, collect, cart, and buy).
For all datasets, we treat \textit{buy} as the target behavior. 
Dataset statistics are summarized in Table~\ref{tab:data_statistics}.

\begin{table*}[t]
  \centering
  \caption{Performance comparison. 
  $^*$ indicates statistical significance for $p<0.05$ under t-test compared to the best baseline.}
  \label{tab:overall_performance}

  \begingroup
  \setlength{\tabcolsep}{2.5pt} 
  \renewcommand{\arraystretch}{1}
    
  \resizebox{\textwidth}{!}{%
    \begin{tabular}{lllcccccccccccccc}
      \toprule
      \multirow{2}{*}{\textbf{Dataset}} &
      \multirow{2}{*}{\textbf{Types}} &
      \multirow{2}{*}{\textbf{Metric}} &
      \multicolumn{2}{c}{\textbf{Single-Behavior}} &
      \multicolumn{12}{c}{\textbf{Multi-Behavior}} \\
      \cmidrule(lr){4-5} \cmidrule(lr){6-17}
      & & & \textbf{MF-BPR} & \textbf{LGCN} &
      \textbf{LGCN-G} & \textbf{MB-GMN} & \textbf{CIGF} & \textbf{CRCGN} &
      \textbf{BCIPM} & \textbf{MB-HGCN} & \textbf{COPF}$\,\,$ & \textbf{MuLe}$\,\,$ &
      \textbf{HGIB} & \textbf{MEMBER} & \textbf{SHaRe}& \textbf{BOAR} \\
      \midrule

      \multirow{6}{*}{\textbf{Tmall}} & \multirow{2}{*}{\textbf{General}} &
      \textbf{HR@10}   & 0.0427 & 0.0391 & 0.1339 & 0.0452 & 0.0621 & 0.0813 & 0.1407 & 0.1462 & 0.1640 & 0.2088 & 0.2415 & \underline{0.3507} & 0.2158 & \textbf{0.3739$^*$} \\
      & & \textbf{NDCG@10} & 0.0210 & 0.0201 & 0.0681 & 0.0228 & 0.0320 & 0.0428 & 0.0765 & 0.0778 & 0.0883 & 0.1158 & 0.1274 & \underline{0.1730} & 0.1133 & \textbf{0.1821$^*$} \\ \cmidrule(lr){2-17}
      & \multirow{2}{*}{\textbf{Unobserved}} &
      \textbf{HR@10}   & 0.0662 & 0.0692 & 0.0835 & 0.0298 & 0.0474 & 0.0327 & 0.0548 & 0.0422 & 0.0426 & 0.0549 & 0.0420 & \underline{0.0948} & 0.0727 & \textbf{0.1367$^*$} \\
      & & \textbf{NDCG@10} & 0.0335 & 0.0382 & 0.0425 & 0.0154 & 0.0250 & 0.0174 & 0.0282 & 0.0223 & 0.0224 & 0.0289 & 0.0213 & \underline{0.0594} & 0.0398 & \textbf{0.0812$^*$} \\ \cmidrule(lr){2-17}
      & \multirow{2}{*}{\textbf{Observed}} &
      \textbf{HR@10}   & 0.3996 & 0.3773 & 0.4256 & 0.3740 & 0.3617 & 0.3841 & 0.4075 & 0.3762 & 0.4048 & \underline{0.4557} & 0.4527 & 0.4554 & 0.4438 & \textbf{0.4587$^*$} \\
      & & \textbf{NDCG@10} & 0.1891 & 0.1765 & 0.2044 & 0.1834 & 0.1705 & 0.1832 & 0.1953 & 0.1796 & 0.1935 & 0.2224 & 0.2202 & \underline{0.2226} & 0.2149 & \textbf{0.2232$^*$} \\
      \midrule

      \multirow{6}{*}{\textbf{Taobao}} & \multirow{2}{*}{\textbf{General}} &
      \textbf{HR@10}   & 0.0239 & 0.0178 & 0.1049 & 0.0498 & 0.0649 & 0.1174 & 0.1270 & 0.1639 & 0.1543 & 0.2121 & 0.2480 & \underline{0.3183} & 0.1842 & \textbf{0.3432$^*$} \\
      & & \textbf{NDCG@10} & 0.0130 & 0.0095 & 0.0591 & 0.0198 & 0.0327 & 0.0655 & 0.0711 & 0.0950 & 0.0895 & 0.1346 & 0.1572 & \underline{0.1640} & 0.1113 & \textbf{0.1886$^*$} \\ \cmidrule(lr){2-17}
      & \multirow{2}{*}{\textbf{Unobserved}} &
      \textbf{HR@10}   & 0.0167 & 0.0124 & 0.0212 & 0.0182 & 0.0206 & 0.0177 & 0.0148 & 0.0153 & 0.0182 & 0.0248 & \underline{0.0252} & 0.0221 & 0.0246 & \textbf{0.0352$^*$} \\
      & & \textbf{NDCG@10} & 0.0093 & 0.0068 & 0.0105 & 0.0094 & 0.0102 & 0.0092 & 0.0077 & 0.0088 & 0.0075 & 0.0137 & \underline{0.0147} & 0.0118 & 0.0140 & \textbf{0.0196$^*$} \\ \cmidrule(lr){2-17}
      & \multirow{2}{*}{\textbf{Observed}} &
      \textbf{HR@10}   & 0.4587 & 0.4619 & 0.4938 & 0.4736 & 0.4849 & 0.4941 & 0.5024 & 0.4840 & 0.5343 & 0.5661 & \textbf{0.5797} & 0.5279 & 0.5435 & \underline{0.5697} \\
      & & \textbf{NDCG@10} & 0.2252 & 0.2290 & 0.2458 & 0.2318 & 0.2327 & 0.2530 & 0.2503 & 0.2446 & 0.2695 & 0.3050 & \textbf{0.3238} & 0.2738 & 0.2883 & \underline{0.3061} \\
      \midrule

      \multirow{6}{*}{\textbf{Jdata}} & \multirow{2}{*}{\textbf{General}} &
      \textbf{HR@10}   & 0.3674 & 0.2779 & 0.4163 & 0.2815 & 0.3650 & 0.4872 & 0.5252 & 0.5234 & 0.4497 & 0.5837 & 0.6502 & \underline{0.6589} & 0.6378 & \textbf{0.6971$^*$} \\
      & & \textbf{NDCG@10} & 0.2234 & 0.1704 & 0.2411 & 0.1652 & 0.2272 & 0.2894& 0.3158 & 0.3402 & 0.2723 & 0.4209 & \underline{0.4667} & 0.4332 & 0.4584 & \textbf{0.4937$^*$} \\ \cmidrule(lr){2-17}
      & \multirow{2}{*}{\textbf{Unobserved}} &
      \textbf{HR@10}   & 0.3610 & 0.2679 & 0.3671 & 0.2295 & 0.2971 & 0.3671 & 0.3740 & 0.4164 & 0.2610 & \underline{0.4749} & 0.4566 & 0.3947 & 0.4641 & \textbf{0.5438$^*$} \\
      & & \textbf{NDCG@10} & 0.2002 & 0.1432 & 0.2103 & 0.1415 & 0.1854 & 0.2029 & 0.2272 & 0.2588 & 0.1743 & 0.2975 & 0.2769 & 0.2403 & \underline{0.3018} & \textbf{0.3665$^*$} \\ \cmidrule(lr){2-17}
      & \multirow{2}{*}{\textbf{Observed}} &
      \textbf{HR@10}   & 0.7324 & 0.7189 & 0.7100 & 0.6937 & 0.7150 & 0.7452 & 0.7363 & 0.7234 & 0.7279 & 0.7693 & \textbf{0.7826} & 0.7383 & 0.7663 & \underline{0.7757} \\
      & & \textbf{NDCG@10} & 0.3751 & 0.4461 & 0.4372 & 0.4396 & 0.4427 & 0.4586 & 0.4458 & 0.4419 & 0.4648 & 0.4749 & \textbf{0.5340} & 0.4920 & 0.5235 & \underline{0.5273} \\
      \bottomrule
    \end{tabular}%
  }
  \endgroup
\end{table*}

\smallsection{Baselines.}
We compare \proposed with \textbf{13} baselines, covering both single- and multi-behavior recommendation methods.
Specifically, we include two single-behavior models—MF-BPR~\cite{BPR} 
and LightGCN (LGCN)~\cite{LightGCN}—and eleven multi-behavior methods: LGCN-G~\cite{LightGCN}, MB-GMN~\cite{MBGMN}, CIGF~\cite{CIGF}, CRGCN~\cite{CRGCN}, BCIPM~\cite{BCIPM}, MB-HGCN~\cite{MBHGCN},  COPF~\cite{COPF}, MuLe~\cite{MuLe}, HGIB~\cite{HGIB}, MEMBER~\cite{MEMBER}, and SHaRe~\cite{SHaRe}.
Among these, HGIB and SHaRe are graph rewiring-based methods; HGIB focuses on pruning-based rewiring to filter noisy auxiliary interactions, while SHaRe applies similarity-based rewiring to MBR, performing pruning on auxiliary behavior graphs and densification on the target behavior graph, serving as a direct rewiring baseline.


\smallsection{Evaluation Protocol.}
We closely follow the evaluation protocols of prior work~\cite{MBGMN,CIGF,BCIPM,COPF,MuLe,HGIB}.
We follow the widely adopted leave-one-out protocol, which holds out the most recent interaction of each user, together with all uninteracted items, as the test set.
The second most recent interaction is reserved as the validation set. 
We evaluate performance using two ranking metrics: Hit Ratio (HR@$k$) and Normalized Discounted Cumulative Gain (NDCG@$k$).
Following prior studies~\cite{BCIPM,MuLe,HGIB,COPF}, we set $k=10$ in all experiments.
We report the average performance of five independent runs, each of which uses different random seeds.


\smallsection{Performance Breakdown by Auxiliary Observability.}
For a more comprehensive evaluation, we report performance under three settings based on the observability of \textit{training auxiliary behaviors}: \textbf{general}, \textbf{observed}, and \textbf{unobserved}.
The general setting averages performance over \textit{all} test instances.
The observed setting reports the average over test instances with \textit{at least one} auxiliary interaction (i.e., observed target items), whereas the unobserved setting does so over test instances \textit{without} auxiliary interactions (i.e., hidden target items).
This breakdown is intended to clearly reveal performance on hidden target items, which is essential for assessing generalization in real-world scenarios;
strong performance in the unobserved setting indicates the model’s ability to discover novel target items beyond those that users have already interacted~with.

\noindent\textbf{Hyperparameter Settings.}
Following~\cite{BCIPM,MuLe,HGIB,COPF,CRGCN,MBHGCN}, we set the embedding dimension d to $64$ and the batch size to $1024$ for all compared methods to ensure a fair comparison; in \proposed, each module uses $32$-dimensional embeddings, keeping the total dimension at $64$.
We use the Adam optimizer~\cite{kingma2017adammethodstochasticoptimization}, where the learning rate was tuned in $\{5 \cdot 10^{-4}, 10^{-4}\}$ and the weight decay was tuned in $\{0, 10^{-6}\}$.
All hyperparameters are tuned via grid search on the validation set.
For BOAR, the adversarial loss weight $\lambda_{\mathrm{adv}}$ was fixed to $0.01$, while the remaining loss weights $\lambda_{\mathrm{dense}}$ and $\lambda_\mathrm{refine}$ were tuned in the range of $[0.1, 1.0]$.
The contrastive temperatures were tuned in the range of $[0.1, 5.0]$.
The number of GNN layers was fixed to $2$ for all graph encoders, and the LSH projection dimension $d'$ was set to $32$, and the candidate retrieval size $n$ was tuned in $\{10, 20, 30, 40, 50\}$.
The Gumbel-Softmax temperature was fixed to $0.2$.
For baseline methods, we closely follow the hyperparameter search ranges reported in the original papers.


\subsection{Results and Analysis}
\subsubsection{\textbf{Overall Performance Comparison.}}
Table~\ref{tab:overall_performance} reports the performance of all methods 
under three evaluation settings: \textit{General}, \textit{Unobserved}, and \textit{Observed}.
\proposed achieves the best performance in the general and unobserved settings across all datasets and metrics, with particularly large gains in the auxiliary-unobserved setting.
In the general setting, \proposed consistently outperforms all baselines, achieving improvements of up to 7.82\% in HR@10 and 15.00\% in NDCG@10 over the best baseline.
Notably, in the unobserved setting, \proposed achieves substantially larger margins, with maximum improvements of 44.2\% in HR@10 and 36.7\% in NDCG@10.

In the observed setting, BOAR achieves the best or second-best performance across datasets; on datasets where HGIB leads, HGIB's exclusive focus on pruning-based rewiring allows it to concentrate full capacity on the observed environment.
This reflects a design trade-off: BOAR distributes model capacity across both environments, gaining substantially on unobserved items at a modest cost in the observed setting, and ultimately achieving the best overall performance in the general setting.
Furthermore, SHaRe consistently underperforms BOAR, confirming that conventional similarity-based rewiring may amplify MBR imperfections rather than resolving them.

\begin{figure}[t]
    \centering
    \includegraphics[width=\linewidth]{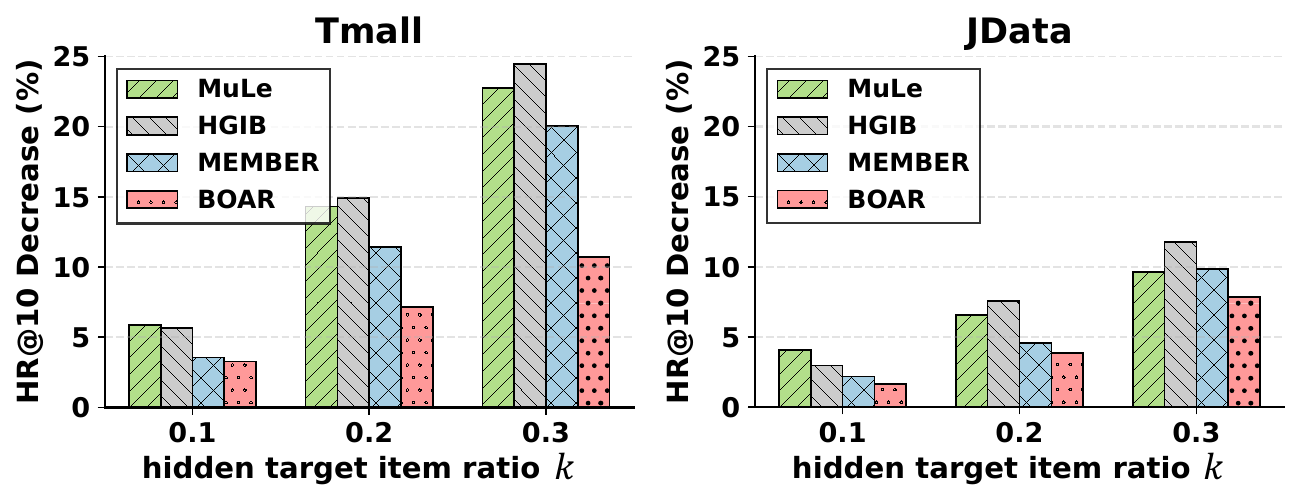}
    \caption{Performance drop in the hidden-preference test.}
    \label{fig:hidden_ratio}
    \vspace{-0.2cm}
\end{figure}

\begin{figure}[t]
    \centering
    \includegraphics[width=\linewidth]{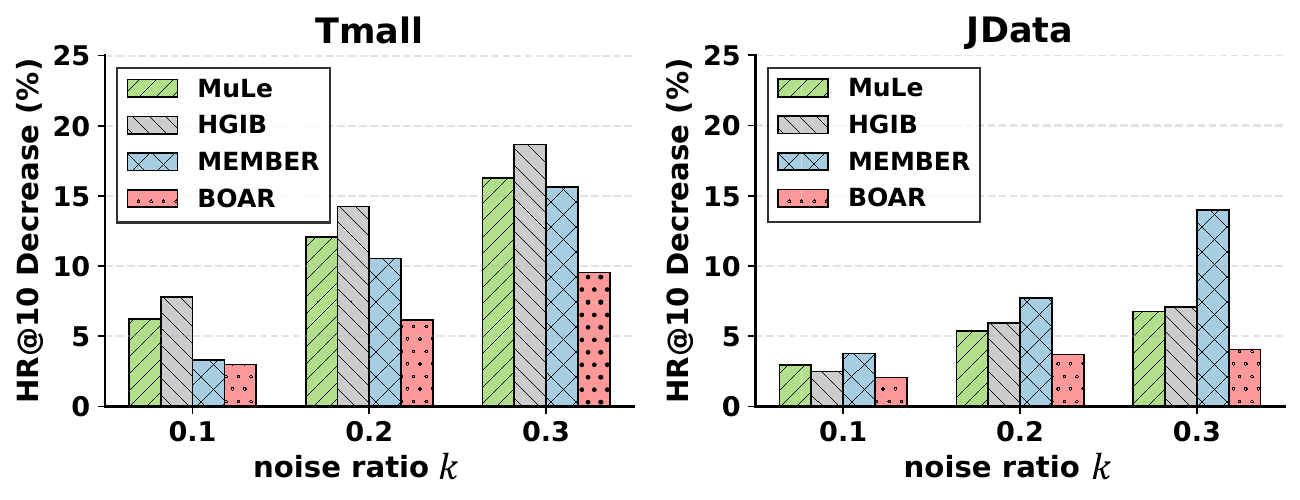}
    \caption{Performance drop in the noise-injection test.}
    \label{fig:noise_ratio}
    \vspace{-0.4cm}
\end{figure}

\subsubsection{\textbf{Stress-Testing of Graph Rewiring Modules.}}
\label{subsub:robust}
To verify that each rewiring module operates as intended, we design two controlled settings that directly examine the accuracy of densification and pruning:
(i) \emph{hidden preference test}, in which hidden target items are increased by removing a fraction $k$ of auxiliary interactions overlapping with target-positive pairs, placing greater demand on the densification of $f_0$; and 
(ii) \emph{noise injection test}, in which auxiliary noise is increased by injecting false auxiliary interactions into a fraction $k$ of target-negative pairs, placing greater demand on the pruning of $f_1$.
A method with accurate rewiring should exhibit minimal performance degradation as $k$ increases, since correct densification recovers the removed signal and correct pruning suppresses the injected noise.\footnote{Because interaction logs do not contain ground-truth purchase labels for every edge, direct edge-level evaluation is infeasible.
Therefore, we use hidden preference discovery and noise suppression as proxy tests for graph rewiring accuracy, reflecting the respective goals of densification and pruning.}

In the \textbf{hidden-preference test (Figure~\ref{fig:hidden_ratio})}, as $k$ increases, all methods degrade, indicating poor generalization when target-positive items lack auxiliary evidence.
\proposed shows the smallest performance drop across $k$, reflecting its ability to recover hidden preferences beyond observed auxiliary relations via debiased densification.
In the \textbf{noise-injection test (Figure~\ref{fig:noise_ratio})}, injecting noisy auxiliary interactions degrades all methods, highlighting the vulnerability of auxiliary-driven message passing.
\proposed exhibits the least degradation as $k$ increases, owing to its target-guided refinement that suppresses noise propagation in auxiliary signals.
These observations collectively show the effectiveness of \proposed in robust target prediction beyond auxiliary behavior imperfections, consistent with Taobao results in Figure~\ref{fig:figure1} (right).

\begin{table}[t]
  \centering
  \caption{Ablation study of BOAR on HR@10.}
  \label{tab:ablation}
  \begingroup
  \setlength{\tabcolsep}{5pt}
  \renewcommand{\arraystretch}{1.2}
  \resizebox{\columnwidth}{!}{%
    \begin{tabular}{l | ccc | ccc}
      \toprule
      \textbf{Setting} & \multicolumn{3}{c|}{\textbf{Unobserved}} & \multicolumn{3}{c}{\textbf{Observed}} \\
      \midrule
      \textbf{Datasets} & \textbf{Tmall} & \textbf{Taobao} & \textbf{JData} & \textbf{Tmall} & \textbf{Taobao} & \textbf{JData} \\
      \midrule
      \textbf{BOAR} & \textbf{0.1367} & \textbf{0.0352} & \textbf{0.5438} & \textbf{0.4587} & \textbf{0.5697} & \textbf{0.7757} \\
      \midrule
      \textbf{Ablation on densification} & & & & & & \\
      \quad w/o $\mathcal{L}_{\mathrm{dense}}$ & 0.1081 & 0.0307 & 0.4768 & \underline{0.4566} & 0.5607 & 0.7740 \\
      \quad w/o $\mathcal{L}_{\mathrm{adv}}$   & 0.1328 & 0.0335 & 0.5320 & 0.4549 & 0.5592 & \underline{0.7753} \\
      \quad w/o SNIPS                          & 0.1214 & 0.0338 & 0.5352 & 0.4565 & \underline{0.5645} & 0.7730 \\
      \textbf{Ablation on refinement} & & & & & & \\
      \quad w/o TAGR                          & 0.1321 & 0.0342 & 0.5391 & 0.4497 & 0.5623 & 0.7691 \\
      \quad w/o $\mathcal{L}_{\mathrm{refine}}$ & \underline{0.1329} & \underline{0.0344} & 0.5256 & 0.4445 & 0.5445 & 0.7636 \\
      \quad w/o Stop-gradient                  & 0.1290 & 0.0334 & 0.5345 & 0.4395 & 0.5426 & 0.7587 \\
      \textbf{Ablation on assignment} & & & & & & \\
      \quad Hard assignment & 0.1315 & 0.0295 & \underline{0.5408} & 0.4559 & 0.5607 & 0.7738 \\
      \quad Learnable assignment & 0.0609 & 0.0202 & 0.4698 & 0.4360 & 0.5411 & 0.7689 \\
      \bottomrule
    \end{tabular}
  }
  \endgroup
\end{table}

\subsubsection{\textbf{Ablation Study.}}
\label{subsub:ablation}
We present a detailed ablation study on the densification ($f_0$) and refinement module ($f_1$).
Table~\ref{tab:ablation} reports the ablation results under both settings, with $f_0$ focusing on the unobserved setting and $f_1$ on the observed setting.
For $f_0$, removing $\mathcal{L}_{\mathrm{dense}}$ leads to a large performance drop, confirming that hidden-preference mining is the primary source of gains in the unobserved environment.
In addition, incorporating both $\mathcal{L}_{\mathrm{adv}}$ and SNIPS yields the best performance, indicating a strong synergy between debiasing at the representation and prediction levels.
For $f_1$, replacing the target-guided auxiliary graph refiner (TAGR) with the standard LGCN encoder without refinement consistently degrades performance.
Moreover, $\mathcal{L}_{\mathrm{refine}}$, together with the stop-gradient operation that treats target embeddings as a fixed anchor, proves effective, supporting the validity of our design.

Lastly, we validate the modular integration design: replacing soft assignment with hard assignment degrades performance, as exclusive assignment prevents each instance from leveraging complementary signals across both modules; substituting a learnable assignment\footnote{
    We replace $[\hat{p}_{ui},\, 1-\hat{p}_{ui}]$ with 
    $[g_\theta(u,i),\, 1-g_\theta(u,i)] = \mathrm{softmax}(\mathbf{W}_g
    [\mathbf{e}_u \| \mathbf{e}_i]+\mathbf{b}_g)$, where 
    $\mathbf{e}_u, \mathbf{e}_i \in \mathbf{E}$ are the raw initial embeddings 
    and $\mathbf{W}_g \in \mathbb{R}^{2 \times 2d}$ is jointly optimized 
    with the prediction objective.
} causes a far more severe drop, as it lacks a principled basis for module assignment, and thus jointly optimizing the assignment with the prediction objective leads to unstable training.

\subsubsection{\textbf{Hyperparameter Study.}}
We provide an analysis to guide hyperparameter selection for \proposed.
Here, we investigate three key hyperparameters, while the remaining ones (e.g., $\lambda_{\mathrm{adv}}$) are fixed.

\noindent\textbf{Loss-balancing weights.}
Figure~\ref{fig:loss_sensitivity} presents the performance of $f_0$ and $f_1$ under varying $\lambda_{\mathrm{dense}}$ and $\lambda_{\mathrm{refine}}$, respectively.
Overall, \proposed shows stable performance with respect to $\lambda_{\mathrm{dense}}$, while small values of $\lambda_{\mathrm{refine}}$ are more effective across datasets.
A sufficiently large $\lambda_{\mathrm{dense}}$ is necessary to effectively exploit hidden preference supervision, whereas excessively increasing $\lambda_{\mathrm{refine}}$ can degrade observed-item ranking due to over-regularization.
In practice, moderate values consistently yield strong and stable performance.

\noindent\textbf{Candidate Retrieval Size.}
In Figure~\ref{fig:hyperparameter_sensitivity2}, we analyze the sensitivity of \proposed to the candidate size $n$ in angular LSH-based retrieval.
Overall, \proposed shows limited sensitivity to $n$, with only minor performance variations across datasets, indicating that learnable edge selection effectively filters unreliable candidates even as $n$ increases.
When $n$ is too small, the effect is limited, as potential hidden target-positive candidates may be missed.

\begin{figure}[t]
    \centering
    \begin{subfigure}{\linewidth}
        \centering
        \includegraphics[width=\linewidth]{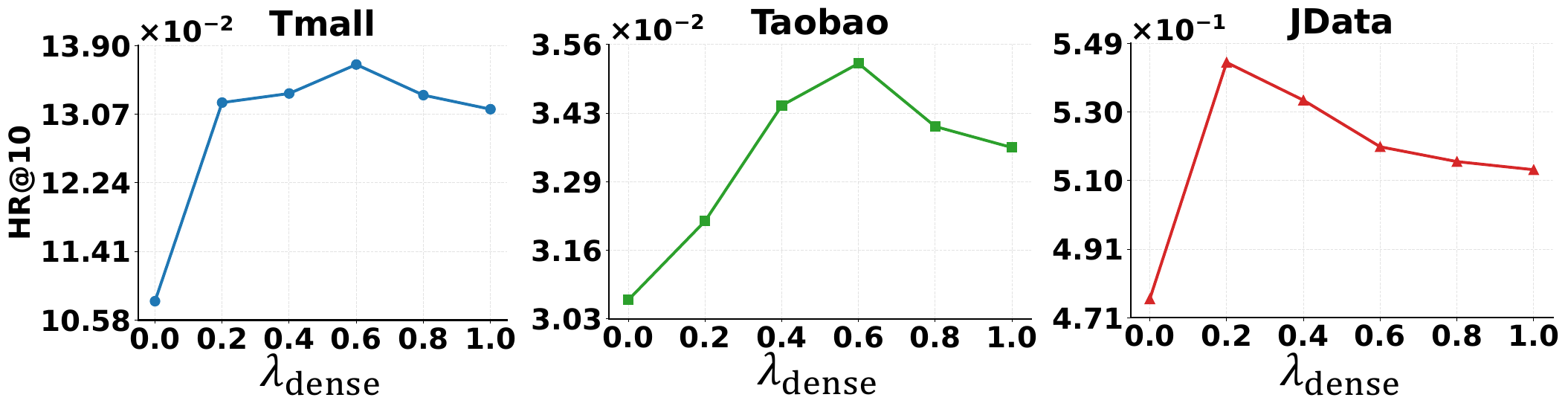}
        \caption{Effect of $\lambda_{\mathrm{dense}}$ on $f_0$ in the \textit{unobserved} setting.}
    \end{subfigure}
    \vspace{0.5em}
    \begin{subfigure}{\linewidth}
        \centering
        \includegraphics[width=\linewidth]{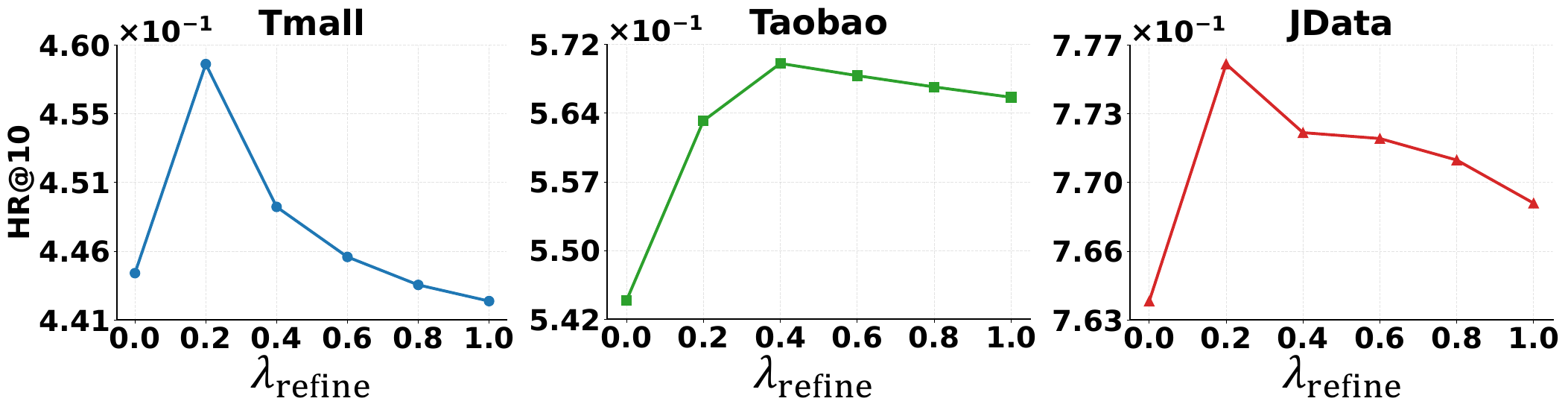}
        \caption{Effect of $\lambda_{\mathrm{refine}}$ on $f_1$ in the \textit{observed} setting.}
    \end{subfigure}
    \caption{Effects of loss-balancing hyperparameters.}
    \label{fig:loss_sensitivity}
\end{figure}

\begin{figure}[t]
    \centering
    \includegraphics[width=\linewidth]
    {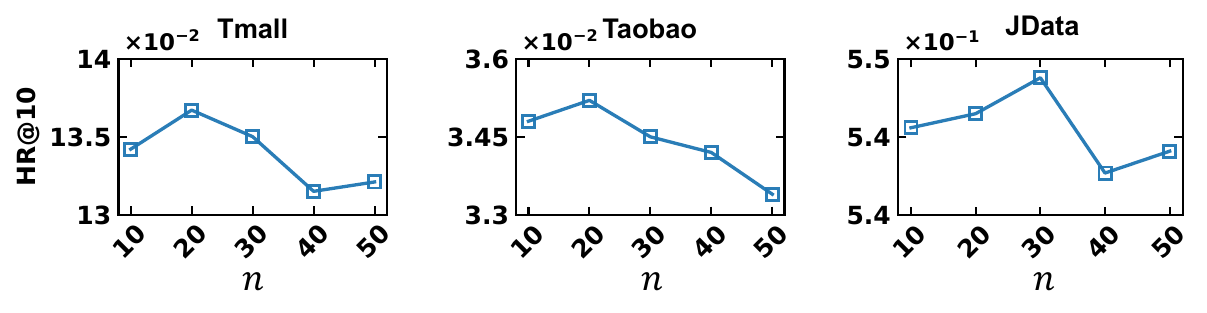}
    \caption{HR@10 under different candidate retrieval sizes $n$.}
    \label{fig:hyperparameter_sensitivity2}
    \vspace{-0.2cm}
\end{figure}

\begin{table}[h]
  \centering
  \caption{Efficiency analysis: comparison of per-epoch training time (sec), inference time (sec), and model size (in millions).
  Training time denotes the average time per epoch, and inference time denotes the time to generate recommendations for all users.}
  \begingroup
  \renewcommand{\arraystretch}{1.}
  \resizebox{0.95\linewidth}{!}{
  \begin{tabular}{llcccc}
    \toprule
    \textbf{Dataset} & \textbf{Metric} & \textbf{MuLe} & \textbf{HGIB} & \textbf{MEMBER} & \textbf{BOAR} \\
    \midrule
      & Train time / epoch & 81.47s & 86.46s & 164.85s & 57.93s \\
     \textbf{Tmall} & Inference time   & 12.52s & 11.24s & 12.82s & 8.05s \\
      & \#Parameters  & 3.44M    & 3.44M    & 3.44M    & 3.44M            \\
    \midrule
      & Train time / epoch  & 70.22s & 64.43s & 116.97s & 49.86s \\
     \textbf{Taobao} & Inference time & 21.69s & 19.66s & 22.84s & 16.99s \\
      & \#Parameters  & 5.65M    & 5.66M    & 5.65M    & 5.65M             \\
    \midrule
      & Train time / epoch   & 100.12s & 90.83s & 178.49s & 95.86s \\
     \textbf{JData} & Inference time & 5.92s  & 5.69s  & 6.22s  & 4.68s \\
      & \#Parameters  & 7.55M    & 7.56M    & 7.55M    & 7.56M            \\
    \bottomrule
  \end{tabular}}
  \endgroup
  \label{tab:inference_time}
\end{table}
\vspace{-1em}
\subsection{Complexity Analysis}
\label{app:complexity}
Table~\ref{tab:inference_time} compares the per-epoch training time and inference time, as well as the model size, of \proposed with state-of-the-art MBR methods~\cite{MuLe, HGIB, MEMBER}.
All experiments are conducted using PyTorch with CUDA on an RTX 5000 Ada GPU and an Intel Xeon Gold 6338 CPU.

\proposed introduces only a small number of additional parameters, mainly a few lightweight linear layers, and since the densification process is applied only during training, it achieves the fastest inference latency among all compared methods.
\proposed also achieves the fastest per-epoch training time on Tmall and Taobao, and remains comparable to HGIB on JData.
Overall, \proposed achieves strong performance with efficiency comparable to state-of-the-art methods, supporting its practical applicability.

\section{Conclusion}
In this work, we first highlight two fundamental challenges of auxiliary behaviors: missing auxiliary signals and unreliable auxiliary signals.
As a solution, we propose \proposed, an environment-conditioned framework that addresses missing and unreliable auxiliary signals through two complementary modules conditioned on auxiliary observability.
Extensive experiments show that \proposed achieves particularly strong gains on hidden target items without auxiliary behaviors, validating its effectiveness in capturing hidden preferences beyond observed auxiliary relations.
Future work may explore streaming settings with continuously evolving user behaviors and interactions.
\appendix
\section{Appendix}


\subsection{Supplementary Proofs and Method Details}
\label{app:supp_method}



\subsubsection{\textbf{On the Validity of Debiasing in \proposed}}
\label{app:ips-validity}
We provide a theoretical interpretation of two debiasing techniques underlying
\proposed: Inverse Propensity Scoring (IPS)~\cite{schnabel2016recommendations}
and adversarial learning~\cite{ben2010theory}.

\smallsection{IPS weighted BPR Loss (Eq.~\eqref{eq:dbpr}).}
\label{app:ips-theory}
Let $O_{ui}\in\{0,1\}$ denote whether the target-behavior pair $(u,i)$ is observable in the training data (e.g., due to platform exposure), with propensity $p_{\mathrm{obs}}(u,i)=\Pr(O_{ui}=1\mid u,i)$.
We assume target-positive signals are observed only when $O_{ui}=1$.
For an observed positive $(u,i)\in\mathcal{E}_{\mathrm{tar}}$, we sample $j\sim q(j\mid u)$ (independent of $O_{ui}$) and use $\ell_\theta(u,i,j)=-\log\sigma(s_{ui}-s_{uj})$.
Although $O_{ui}=1$ for all logged pairs in practice, $O_{ui}$ is treated as a Bernoulli random variable in the data-generating process, enabling importance weighting.

Define the ideal risk $\mathcal{L}_{\mathrm{ideal}}= \mathbb{E}[\ell_\theta(u,i,j)]$ and the IPS-weighted risk
$\mathcal{L}_{\mathrm{IPS}}=\mathbb{E}\!\left[\frac{O_{ui}}{\hat{p}_{\mathrm{obs}}(u,i)}\ell_\theta(u,i,j)\right]$.

\begin{theorem}[Unbiasedness under correct 
propensities~\cite{schnabel2016recommendations}]
\label{thm:ips}
Assume $p_{\mathrm{obs}}(u,i)>0$.
If $\hat{p}_{\mathrm{obs}}(u,i)=p_{\mathrm{obs}}(u,i)$ almost surely, then $\mathrm{Bias}[\mathcal{L}_{\mathrm{IPS}}] =|\mathcal{L}_{\mathrm{IPS}}-\mathcal{L}_{\mathrm{ideal}}|
=0$.
\end{theorem}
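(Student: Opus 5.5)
The argument is the standard tower-property computation for inverse propensity weighting, applied to the IPS-weighted risk $\mathcal{L}_{\mathrm{IPS}}$ and the ideal risk $\mathcal{L}_{\mathrm{ideal}}$ defined just above. The plan is to condition on the pair $(u,i)$ and the negative $j$, integrate out the observation indicator $O_{ui}$, and show that the weight $O_{ui}/\hat{p}_{\mathrm{obs}}(u,i)$ has conditional mean one. The two risks then coincide term by term.

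\textbf{Step 1: make the expectations explicit.} Both $\mathcal{L}_{\mathrm{ideal}}$ and $\mathcal{L}_{\mathrm{IPS}}$ are expectations over the same joint law of $(u,i,j,O_{ui})$. Here $(u,i)$ ranges over target-positive pairs and $j\sim q(j\mid u)$. By assumption, $j$ is drawn independently of $O_{ui}$ given $u$. With this, the law of total expectation gives
\[
\mathcal{L}_{\mathrm{IPS}}=\mathbb{E}_{u,i,j}\!\left[\mathbb{E}\!\left[\frac{O_{ui}}{\hat{p}_{\mathrm{obs}}(u,i)}\,\ell_\theta(u,i,j)\,\middle|\,u,i,j\right]\right].
\]

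\textbf{Step 2: pull out the measurable factors.} Given $(u,i,j)$, the loss $\ell_\theta(u,i,j)$ is fixed. The quantity $\hat{p}_{\mathrm{obs}}(u,i)$ is a function of $(u,i)$ only, and under the hypothesis it equals $p_{\mathrm{obs}}(u,i)>0$ almost surely. Both can therefore be moved outside the inner expectation.

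\textbf{Step 3: evaluate the inner expectation.} Because $j$ is independent of $O_{ui}$,
\[
\mathbb{E}[O_{ui}\mid u,i,j]=\Pr(O_{ui}=1\mid u,i)=p_{\mathrm{obs}}(u,i).
\]
The inner expectation thus collapses to
\[
\frac{p_{\mathrm{obs}}(u,i)}{p_{\mathrm{obs}}(u,i)}\,\ell_\theta(u,i,j)=\ell_\theta(u,i,j).
\]
Substituting back yields $\mathcal{L}_{\mathrm{IPS}}=\mathbb{E}[\ell_\theta(u,i,j)]=\mathcal{L}_{\mathrm{ideal}}$, so the bias is zero.

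\textbf{Main obstacle: integrability and well-posedness.} The delicate part is justifying the interchanges of expectation, not the algebra.
\begin{itemize}
\item We need $\mathbb{E}|\ell_\theta|<\infty$. On a finite user–item set this is automatic, since $\ell_\theta$ is finite for finite scores; otherwise it has to be assumed.
\item Positivity $p_{\mathrm{obs}}>0$ is what keeps the weight finite. If $p_{\mathrm{obs}}(u,i)=0$ on some set, the ratio becomes $0/0$ there and those pairs would silently drop out of $\mathcal{L}_{\mathrm{IPS}}$.
\item The "almost surely" in the hypothesis only permits disagreement between $\hat{p}_{\mathrm{obs}}$ and $p_{\mathrm{obs}}$ on null sets, which do not affect the expectation.
\end{itemize}
I would state these conditions explicitly before running the computation above.
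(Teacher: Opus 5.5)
Your proposal is correct and follows essentially the same argument as the paper. You condition on $(u,i,j)$ where the paper conditions on $(u,i)$, which is equivalent given that $j$ is independent of $O_{ui}$; then, with $\ell_\theta$ fixed, $\mathbb{E}[O_{ui}\mid u,i]=p_{\mathrm{obs}}(u,i)=\hat{p}_{\mathrm{obs}}(u,i)$ makes the weight's conditional mean one, and your integrability and positivity remarks make explicit what the paper leaves implicit.
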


\begin{proof}
Since $\ell_\theta(u,i,j)$ is deterministic given $(u,i,j)$,
\[
\mathcal{L}_{\mathrm{IPS}}
=\mathbb{E}_{u,i,j}\!\left[\ell_\theta(u,i,j)\cdot
\mathbb{E}\!\left[\left.
\tfrac{O_{ui}}{\hat{p}_{\mathrm{obs}}(u,i)}
\right|u,i\right]\right].
\]
Since $\mathbb{E}[O_{ui}\mid u,i]=p_{\mathrm{obs}}(u,i)=
\hat{p}_{\mathrm{obs}}(u,i)$, the inner expectation is $1$,
yielding $\mathcal{L}_{\mathrm{IPS}}=
\mathcal{L}_{\mathrm{ideal}}$.
\end{proof}

\paragraph{Connection to Eq.~\eqref{eq:dbpr}.}
Eq.~\eqref{eq:dbpr} approximates $\mathcal{L}_{\mathrm{IPS}}$ 
via Monte-Carlo over logged positives.
Following~\cite{li2023stabledr, schnabel2016recommendations,Zhu0WW25}, we estimate the pair-level propensity $p_{\mathrm{obs}}(u,i)$ via a lightweight binary classifier $\hat{p}_{ui}$ (Appendix~\ref{app:propensity}), trained to predict whether a user--item pair is auxiliary-observed. We set $\omega_{ui} \propto 1/\hat{p}_{ui}$ (SNIPS), up-weighting pairs with low auxiliary-observation propensity and down-weighting those with high propensity to correct observation bias.

\smallsection{Auxiliary Popularity Adversarial Learning 
(Eq.~\eqref{eq:adv}).}
Let $\mathcal{D}_{\mathrm{above}}$ and 
$\mathcal{D}_{\mathrm{below}}$ denote the item distributions 
over items whose auxiliary interaction count exceeds ($y_i=1$) or falls below ($y_i=0$) the median (Sec.~4.2.1).
Let $z=g_\psi(i)\in\mathcal{Z}$ be the learned item 
representation, with induced distributions
$P_\psi:=g_\psi\#\mathcal{D}_{\mathrm{above}}$ and 
$Q_\psi:=g_\psi\#\mathcal{D}_{\mathrm{below}}$.
Let $\mathcal{H}_d$ be binary classifiers 
$h:\mathcal{Z}\to\{0,1\}$ where $h(z){=}1$ predicts above-median 
and $h(z){=}0$ predicts below-median.

\begin{lemma}[$\mathcal{H}$-divergence~\cite{ben2010theory}]
\label{lem:hdiv}
\[
d_{\mathcal{H}_d}(P_\psi,Q_\psi)
:=2\sup_{h\in\mathcal{H}_d}
\left|\Pr_{z\sim P_\psi}[h(z){=}1]
-\Pr_{z\sim Q_\psi}[h(z){=}1]\right|.
\]
With $\varepsilon_d^*:=\min_{h}\varepsilon_d(h)$ where
$\varepsilon_d(h):=\frac{1}{2}\Pr_{P_\psi}[h{=}0]
+\frac{1}{2}\Pr_{Q_\psi}[h{=}1]$,
and $\mathcal{H}_d$ complement-closed,
$d_{\mathcal{H}_d}(P_\psi,Q_\psi)=2(1-2\varepsilon_d^*)$.
Hence $\varepsilon_d^*\to\tfrac{1}{2}$ implies 
$d_{\mathcal{H}_d}(P_\psi,Q_\psi)\to 0$.
\end{lemma}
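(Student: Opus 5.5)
The identity follows by rewriting the quantity inside the supremum in terms of the balanced error $\varepsilon_d(h)$, and then using complement-closedness to remove the absolute value.

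\textbf{Step 1: a single classifier.} Fix $h\in\mathcal{H}_d$. Write $a(h):=\Pr_{P_\psi}[h{=}1]$ and $b(h):=\Pr_{Q_\psi}[h{=}1]$. Since $\Pr_{P_\psi}[h{=}0]=1-a(h)$, the definition of $\varepsilon_d$ gives
\[
\varepsilon_d(h)=\tfrac12\bigl(1-a(h)\bigr)+\tfrac12 b(h)=\tfrac12-\tfrac12\bigl(a(h)-b(h)\bigr),
\]
that is,
\[
a(h)-b(h)=1-2\varepsilon_d(h).
\]

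\textbf{Step 2: the complementary classifier.} The complement $\bar h:=1-h$ lies in $\mathcal{H}_d$ by complement-closedness. It satisfies $a(\bar h)=1-a(h)$ and $b(\bar h)=1-b(h)$, so $a(\bar h)-b(\bar h)=-(a(h)-b(h))$. Hence
\[
\bigl|a(h)-b(h)\bigr|=\max\bigl\{a(h)-b(h),\,a(\bar h)-b(\bar h)\bigr\}.
\]
Because $h\mapsto\bar h$ is a bijection of $\mathcal{H}_d$, taking the supremum over $h$ yields
\[
\sup_{h}\bigl|a(h)-b(h)\bigr|=\sup_{h}\bigl(a(h)-b(h)\bigr)=\sup_{h}\bigl(1-2\varepsilon_d(h)\bigr)=1-2\inf_{h}\varepsilon_d(h).
\]
With $\varepsilon_d^*$ read as the minimum over $\mathcal{H}_d$ (an infimum, if it is not attained), this gives $d_{\mathcal{H}_d}(P_\psi,Q_\psi)=2(1-2\varepsilon_d^*)$.

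\textbf{Step 3: the limit.} The map $\varepsilon_d^*\mapsto 2(1-2\varepsilon_d^*)$ is continuous and vanishes at $\varepsilon_d^*=\tfrac12$, so $\varepsilon_d^*\to\tfrac12$ implies $d_{\mathcal{H}_d}(P_\psi,Q_\psi)\to0$. As a side remark, Step 2 already shows $\varepsilon_d^*\le\tfrac12$, since $\varepsilon_d(h)+\varepsilon_d(\bar h)=1$ for every $h$; the distance is therefore nonnegative and $\tfrac12$ is its extreme value.

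\textbf{Main obstacle: the interpretive step.} The step that needs care is pinning down the class over which $\min_h$ in $\varepsilon_d^*$ ranges. The lemma writes $\min_h$ without a subscript. The identity requires that minimum to be over the same $\mathcal{H}_d$ used in the supremum, not over all measurable maps; I would state this explicitly. Reading it over all measurable maps would yield the total-variation version and would break the equality for a restricted class.

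A second point is the link to the method. One would like to say that the GRL-trained discriminator attaining chance-level balanced error means $\varepsilon_d^*\to\tfrac12$, but that holds only if the discriminator approximately attains the infimum over $\mathcal{H}_d$. I would keep this as an assumption in the interpretation rather than inside the lemma. I would also note that the balanced weighting $\tfrac12,\tfrac12$ matches the median split, which makes the two classes equally likely.
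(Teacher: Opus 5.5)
Your proposal is correct and follows the paper's proof essentially step for step: you rewrite $\Pr_{P_\psi}[h{=}1]-\Pr_{Q_\psi}[h{=}1]$ as $1-2\varepsilon_d(h)$, then use complement-closedness to remove the absolute value in the supremum. Your Step~2 spells out two things the paper's one-line argument leaves implicit: the involution $h\mapsto 1-h$ on $\mathcal{H}_d$, and the requirement that $\varepsilon_d^*$ be the minimum (or infimum) over $\mathcal{H}_d$ itself.
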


\begin{proof}
For any $h\in\mathcal{H}_d$,
$\Pr_{P_\psi}[h{=}1]-\Pr_{Q_\psi}[h{=}1]
=1-2\varepsilon_d(h)$.
Since $\mathcal{H}_d$ is complement-closed,
$\sup_h|\Pr_{P_\psi}[h{=}1]-\Pr_{Q_\psi}[h{=}1]|
=1-2\varepsilon_d^*$. Multiplying by $2$ yields the claim.
\end{proof}

\begin{corollary}[Why both GRL and IPS are required]
\label{cor:grl-ips}
If the GRL-based adversarial objective drives 
$\varepsilon_d^*\approx\tfrac{1}{2}$, then by 
Lemma~\ref{lem:hdiv}, $d_{\mathcal{H}_d}(P_\psi,Q_\psi)
\approx 0$, meaning $g_\psi$ suppresses popularity-driven observation bias from item representations in $\mathbf{E}^{\mathrm{glo}}$.
This encourages LSH-based candidate retrieval (Sec.~\ref{subsubsec:thpm}) to surface preference-aligned hidden items rather than popularity-similar ones; these candidates are incorporated into $\tilde{\mathbf{A}}_{\mathrm{tar}}$ and propagated to $\mathbf{E}^{\mathrm{tar}}$ via LGCN, supporting reliable recovery of hidden preferences at the representation level.

However, even with suppressed popularity-driven observation bias in representations, optimizing the ranking objective on logged positives can still be dominated by frequently auxiliary-observed items, since they appear more often in $\mathcal{E}_{\mathrm{tar}}$ regardless of their representation.
The propensity-weighted objective in Theorem~\ref{thm:ips} corrects this prediction-level bias, while SNIPS provides variance-stabilized approximations.
Therefore, removing either component leaves (i) popularity-driven observation bias in $\mathbf{E}^{\mathrm{glo}}$, degrading candidate quality for hidden items (without GRL), or (ii) observation bias in the ranking objective toward frequently auxiliary-observed items (without IPS), both of which hinder reliable recovery of hidden preferences.
\end{corollary}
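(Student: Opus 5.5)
The corollary has two halves, and I will treat each as an implication from results already in hand, not as a new quantitative bound.

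For the first half, I take the adversarial objective's success as the hypothesis: the discriminator $f_\phi$, trained through the GRL on $\mathcal{S}_{\mathrm{adv}}$, can do no better than chance, so $\varepsilon_d^*\approx\tfrac12$. Here $\mathcal{H}_d$ is taken to contain the thresholded linear discriminators. It is complement-closed, because negating the weights and bias of a linear layer flips its prediction. Lemma~\ref{lem:hdiv} then gives $d_{\mathcal{H}_d}(P_\psi,Q_\psi)=2(1-2\varepsilon_d^*)\approx 0$. In words, no linear probe separates above-median from below-median items in $\mathbf{E}^{\mathrm{glo}}$.

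Next I pass this to retrieval. Angular LSH assigns buckets by $\arg\max_k(\mathbf{s}_x)_k$, which is an argmax over linear functionals of the normalized embedding. So the bucket-membership events are (intersections of) halfspaces. If I enlarge $\mathcal{H}_d$ to include these bucket indicators, the vanishing divergence says that the probability an item falls in a given bucket hardly depends on its popularity class. Hence $\mathcal{E}_{\mathrm{cand}}(u)$ is not skewed toward popular items. The candidates are then selected into $\tilde{\mathbf{A}}_{\mathrm{tar}}$ and propagated by LGCN into the target view via $\mathcal{L}_{dense}$, which gives the representation-level conclusion.

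For the second half, I argue that GRL alone cannot remove prediction-level bias. The empirical ranking loss is a sum over logged positives, so its expectation weights each pair by $p_{\mathrm{obs}}(u,i)$, and this holds no matter how popularity-free the embeddings are. Theorem~\ref{thm:ips} shows that reweighting by $1/\hat p$ restores $\mathcal{L}_{\mathrm{ideal}}$. SNIPS is a ratio estimator whose bias is $O(1/N)$, so it inherits consistency with lower variance. To show both components are needed, I would give two toy settings.

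\begin{itemize}
\item \textbf{IPS without GRL.} Take embeddings whose leading direction encodes popularity. LSH buckets then cluster by popularity, and hidden low-popularity items never become candidates. Reweighting the loss cannot repair candidates that were never retrieved.
\item \textbf{GRL without IPS.} Take popularity-free embeddings but a skewed $p_{\mathrm{obs}}$. The minimizer of the unweighted risk then differs from that of $\mathcal{L}_{\mathrm{ideal}}$.
\end{itemize}

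The main obstacle is the first half's step from ``$\varepsilon_d^*\approx\tfrac12$ for linear discriminators'' to ``LSH bucket assignment is popularity-independent.'' Bucket events are argmax regions rather than single halfspaces, and the linear-layer class $\mathcal{H}_d$ does not obviously contain them. I would first try to make the corollary's hypothesis an explicit assumption that $\mathcal{H}_d$ includes the bucket indicators. If that is too strong, I would state the conclusion only as ``encourages,'' which is how the corollary itself is phrased, and leave it qualitative rather than claiming a bound.
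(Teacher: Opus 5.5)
Your proposal is correct and follows the paper's own argument, which is purely qualitative and given only in the corollary text itself: Lemma~\ref{lem:hdiv} turns $\varepsilon_d^*\approx\tfrac12$ into $d_{\mathcal{H}_d}(P_\psi,Q_\psi)\approx 0$, the effect on LSH retrieval and densification is asserted only as ``encourages,'' Theorem~\ref{thm:ips} supplies the prediction-level correction, and necessity is argued by removing each component. Your extra steps (complement-closedness of linear probes, buckets as argmax cones, the two toy settings) go beyond what the paper writes, and you rightly note that getting from linear-probe indistinguishability to popularity-balanced buckets, and further to the within-bucket top-$n$ cosine selection, would need an added assumption; falling back to the qualitative ``encourages'' is exactly where the paper stops.
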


\subsubsection{\textbf{Propensity Computation}}
\label{app:propensity}
Following~\cite{schnabel2016recommendations, li2023stabledr, Zhu0WW25}, 
we estimate $\hat{p}_{ui} = \sigma({\mathbf{e}^{\mathrm{aux}}_u}^\top \mathbf{e}^{\mathrm{aux}}_i)$ via a lightweight binary classifier, where $\mathbf{E}^{\mathrm{aux}} = \mathrm{LGCN}(\mathbf{A}_{\mathrm{aux}}, \mathbf{E})$ and $\mathbf{A}_{\mathrm{aux}}$ is constructed from $\mathcal{E}_{\mathrm{aux}}$.
The classifier is trained via binary cross-entropy with $E_{ui}$ (Sec.~\ref{subsec:env_modeling}) as supervision labels, where $\mathcal{D}$ is constructed by sampling positives from $\mathcal{E}_{\mathrm{aux}}$ and, for each user $u$, randomly sampling auxiliary-unobserved items as negatives at a 1:1 ratio:
\begin{equation}
\mathcal{L}_{\mathrm{BCE}} = 
- \sum_{(u,i)\in \mathcal{D}} 
\bigl[E_{ui}\log \hat{p}_{ui} + (1-E_{ui})\log(1-\hat{p}_{ui})\bigr].
\end{equation}
We then apply self-normalized inverse propensity scoring:
\begin{equation}
\omega_{ui} = \frac{\hat{p}_{ui}^{-1}}
{\sum_{(u',i')\in\mathcal{E}_{\mathrm{tar}}} \hat{p}_{u'i'}^{-1}},
\end{equation}
where $\hat{p}_{ui}$ is clipped to $\max(\hat{p}_{ui}, 10^{-5})$ for numerical stability.

\begin{acks}
This work was supported by a Korea University Grant, IITP grants funded by the Korea government (MSIT): the ICT Creative Consilience Program (IITP-2026-RS-2020-II201819) and the Artificial Intelligence Star Fellowship (IITP-2026-RS-2025-02304828).
This work was also supported by NRF grants funded by the MSIT (RS-2026-25486220) and by the Basic Science Research Program funded by the Ministry of Education (NRF-2021R1A6A1A03045425).
\end{acks}

\section*{GenAI Usage Disclosure}
The authors employed LLM tools in a limited capacity, namely polishing grammar in the written manuscript and assisting with code debugging. All such outputs were verified and revised by the authors.
The core research process, including problem formulation and model design, was conducted entirely without LLM assistance.

\bibliographystyle{ACM-Reference-Format}
\vspace{-1em}
\bibliography{reference}
\end{document}